\theoremstyle{plain}
\newtheorem{theorem}{Theorem}[section]
\newtheorem{lemma}[theorem]{Lemma}
\theoremstyle{definition}
\newtheorem{definition}[theorem]{Definition}
\theoremstyle{remark}
\definecolor{customgray}{rgb}{0.9, 0.9, 0.9}
\definecolor{lightgreen}{rgb}{0.83, 0.96, 0.72}
\definecolor{lightyellow}{rgb}{0.94, 0.86, 0.51}
\definecolor{lightred}{rgb}{0.99, 0.66, 0.66}
\ulposdef{\hlgray}[xoffset=1pt]{\mbox{\color{customgray}\rule[-.7ex]{\ulwidth}{2.5ex}}}
\ulposdef{\hlgreen}[xoffset=1pt]{\mbox{\color{lightgreen}\rule[-.7ex]{\ulwidth}{2.5ex}}}
\ulposdef{\hlyel}[xoffset=1pt]{\mbox{\color{lightyellow}\rule[-.7ex]{\ulwidth}{2.5ex}}}
\ulposdef{\hlred}[xoffset=1pt]{\mbox{\color{lightred}\rule[-.7ex]{\ulwidth}{2.5ex}}}
\DeclareMathOperator*{\expect}{\mathbb{E}}
\DeclareMathOperator*{\prob}{\mathbb{P}}
\DeclareMathOperator*{\key}{{\mathcal{K}}}
\newcommand{\vocab}{\mathcal{V}}
\definecolor{teaserblue}{RGB}{242, 242, 255}
\icmltitlerunning{A Watermark for Large Language Models. Page \thepage\ of \pageref{lastpagemaintext}.}
\begin{document}

\twocolumn[

\icmltitle{A Watermark for Large Language Models}

\icmlsetsymbol{equal}{*}

\begin{icmlauthorlist}
\icmlauthor{John Kirchenbauer}{equal}
\icmlauthor{Jonas Geiping}{equal}
\icmlauthor{Yuxin Wen}{}
\icmlauthor{Jonathan Katz}{}
\icmlauthor{Ian Miers}{}
\icmlauthor{Tom Goldstein}{}

\end{icmlauthorlist}

\begin{center}
\textbf{University of Maryland} %\vspace{-.3cm} %:<
%\vspace{-3mm}
\end{center}

\icmlcorrespondingauthor{John Kirchenbauer}{jkirchen@umd.edu}

\icmlkeywords{Machine Learning, LLMs, Watermark, Language Model, Natural Language Processing, Generative AI}

\vskip 0.3in
]

\printAffiliationsAndNotice{\icmlEqualContribution. Code and demo are available at \href{https://github.com/jwkirchenbauer/lm-watermarking}{\texttt{github.com/jwkirchenbauer/lm-watermarking}}}

% Remove the abstract margin
\renewenvironment{abstract}
 {
  \begin{center}
  \bfseries \abstractname\vspace{-.5em}\vspace{0pt}
  \end{center}
  \list{}{%
    \setlength{\leftmargin}{0mm}% <---------- CHANGE HERE
    \setlength{\rightmargin}{\leftmargin}%
  }%
  \item\relax}
 {\endlist}
% \vspace{-0.5cm}
\begin{abstract}
\vspace{-.2cm}
\looseness -1 Potential harms of large language models can be mitigated by {\em watermarking} model output, i.e., embedding signals into generated text that are invisible to humans but algorithmically detectable from a short span of tokens.  
We propose a watermarking framework for proprietary language models. The watermark can be embedded with negligible impact on text quality, and can be detected using an efficient open-source algorithm without access to the language model API or parameters.  The watermark works by selecting a randomized set of  ``green'' tokens before a word is generated, and then softly promoting use of green tokens during sampling.  
We propose a statistical test for detecting the watermark with interpretable p-values, and derive an information-theoretic framework for analyzing the sensitivity of the watermark. We test the watermark using a multi-billion parameter model from the Open Pretrained Transformer (OPT) family, and discuss robustness and security.
\end{abstract}
\vspace{-0.5cm}

\BgThispage
\backgroundsetup{
  contents={\hspace{9.2cm}   \includegraphics[height=17cm]{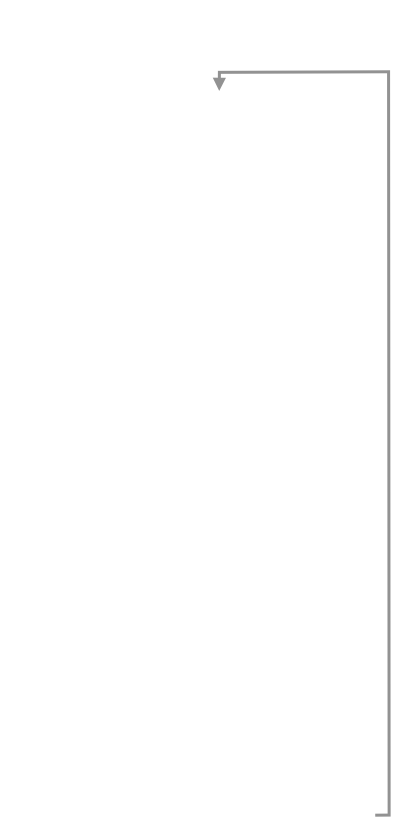}
  }
  }
  
\section{Introduction}
\label{sec:intro_rules}
Large language models (LLMs), such as the recently developed ChatGPT, can write documents, create executable code, and answer questions, often with human-like capabilities \citep{schulman_chatgpt_2022}.  As these systems become more pervasive, there is increasing risk that they may be used for malicious purposes \citep{bergman_guiding_2022, mirsky_threat_2023}.  These include social engineering and election manipulation campaigns that exploit automated bots on social media platforms, creation of fake news and web content, and use of AI systems for cheating on academic writing and coding assignments. Furthermore, the proliferation of synthetic data on the web complicates future dataset creation efforts, as synthetic data is often inferior to human content and must be detected and excluded before model training \citep{radford_robust_2022}.
For many reasons, the ability to detect and audit the usage of machine-generated text becomes a key principle of harm reduction  for large language models~\citep{bender_dangers_2021,crothers_machine_2022,grinbaum_ethical_2022}.

\begin{figure}[t]
\vskip 0.2in
\begin{center}
\centerline{\includegraphics[width=\columnwidth]{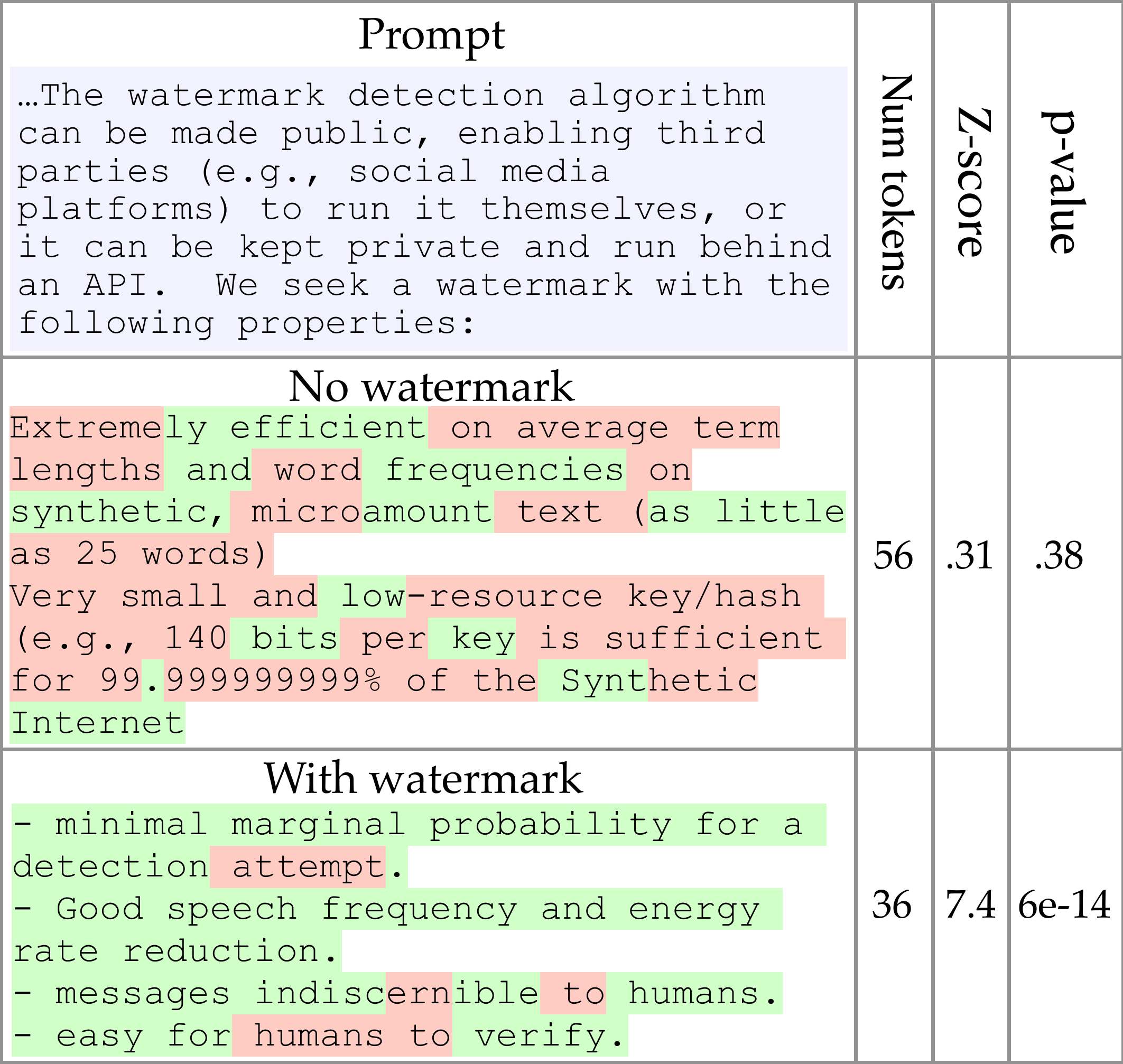}}
\vspace{-2mm}
\caption{Outputs of a language model, both with and without the application of a watermark.  The watermarked text, if written by a human, is expected to contain 9 ``green'' tokens, yet it contains 28. The probability of this happening by random chance is $\approx 6\times 10^{-14}$, leaving us {\em extremely} certain that this text is machine generated. 
Words are marked with their respective colors. The model is OPT-6.7B using multinomial sampling. Watermark parameters are $\gamma,\delta=(0.25,2)$. The prompt is the whole paragraph marked in blue below.}
\label{fig:teaser}
%\end{center}
\vspace{-0.75cm}
\end{center}
\end{figure}

\begin{mdframed}[backgroundcolor=teaserblue,hidealllines=true]
In this work, we study \emph{watermarking} of language model output. A watermark is a hidden pattern in text that is imperceptible to humans, while making the text algorithmically identifiable as synthetic. We propose an efficient watermark that makes synthetic text  detectable from short spans of tokens (as few as 25 tokens),  while false-positives (where human text is marked as machine-generated)  are statistically improbable.  The watermark detection algorithm can be made public,   enabling third 
parties (e.g., social media  platforms) to run it themselves, or it can be kept private and run behind an API. We seek
a watermark with the following properties:
\end{mdframed}

\begin{itemize}
\item The watermark can be algorithmically detected without any knowledge of the model parameters or access to the language model API.  This property allows the detection algorithm to be open sourced even when the model is not.  This also makes detection cheap and fast because the LLM does not need to be loaded or run.
\item Watermarked text can be generated using a standard language model without re-training. 
\item The watermark is detectable from only a contiguous portion of the generated text. This way, the watermark remains detectable when only a slice of the generation is used to create a larger document.%string.
\item The watermark cannot be removed without modifying a significant fraction of the generated tokens.
\item We can compute a rigorous statistical measure of confidence that the watermark has been detected.
\end{itemize}

\subsection{Notation \& Language model basics}

Language models have a ``vocabulary'' $\vocab$ containing words or word fragments known as ``tokens.'' Typical vocabularies contain $|\vocab| = 50,000$ tokens or more \citep{radford_language_2019,liu_roberta_2019}. 
Consider a sequence of $T$ tokens $\{s^{(t)}\}\in \vocab^T$. Entries with negative indices, $s^{(-N_p)},\cdots, s^{(-1)}$, represent a ``prompt'' of length $N_p$ and $s^{(0)},\cdots,s^{(T)}$ are tokens generated by an AI system in response to the prompt.

A {\em language model} (LM) for next word prediction is a function $f$, often parameterized by a neural network, that accepts as input a sequence of known tokens $s^{(-N_p)}, \cdots, s^{(t-1)}$, which contains a prompt and the first $t-1$ tokens already produced by the language model, and then outputs a vector of $|V|$ logits, one for each word in the vocabulary. These logits are then passed through a softmax operator to convert them into a discrete probability distribution over the vocabulary.
The next token at position $t$ is then sampled from this distribution using either standard multinomial sampling, or \textit{greedy} sampling (greedy decoding) of the single most likely next token. Additionally, a procedure such as {\em beam search} can be employed to consider multiple possible sequences before selecting the one with the overall highest score.

\subsection{A caveat: The difficulty of watermarking low-entropy sequences} \label{entropyproblem}
Consider the following two sequences of tokens, with prompts in red:
\begin{center}
\vspace{-2mm}
\texttt{{\color{red}The\!\! quick\!\! brown}\!\! fox\!\! jumps\! over\! the\!\! lazy\!\! dog}\\
\texttt{{\color{red}for(i=}0;i<n;i++) sum+=array[i]}
\vspace{-2mm}
\end{center}
Were they produced by a human or by a language model?
Determining this is fundamentally hard because these sequences have low entropy; the first few tokens strongly determine the following tokens.  

Low entropy text creates two problems for watermarking.  First, both humans and machines provide similar if not identical completions for low entropy prompts, making it impossible to discern between them.  Second, it is difficult to watermark low entropy text, as any changes to the choice of tokens may result in high perplexity, unexpected tokens that degrade the quality of the text. 
Later, we rigorously define sentence entropy, and analyze its impact on watermark detection.

\section{A simple proof of concept}
We start out by describing a simple ``hard'' red list watermark in \cref{hard} that is easy to analyze, easy to detect and hard to remove. The simplicity of this approach comes at the cost of poor generation quality on low entropy sequences. We will discuss more sophisticated strategies later.

\begin{algorithm}[h]
   \caption{Text Generation with Hard Red List}
   \label{hard}
\begin{algorithmic}
\STATE \textbf{Input:} prompt, $s^{(-N_p)}\cdots s^{(-1)}$
\FOR{$t=0,1,\cdots$}
 \STATE 
 \begin{enumerate}
\item Apply the language model to prior tokens $s^{(-N_p)}\cdots s^{(t-1)}$ to get a probability vector $p^{(t)}$ over the vocabulary.
\item Compute a hash of token $s^{(t-1)},$ and use it to seed a random number generator.
\item Using this seed, randomly partition the vocabulary into a ``green list'' $G$ and a ``red list'' $R$ of equal size. 
\item Sample $s^{(t)}$ from $G$ , never generating any token in the red list.
\end{enumerate}
\ENDFOR
\end{algorithmic}
\end{algorithm}
The method works by generating a pseudo-random red list of tokens that are barred from appearing as $s^{(t)}.$   The red list generator is seeded with the prior token $s^{(t-1)}$, enabling the red list to be reproduced later without access to the entire generated sequence.

\textbf{Detecting the watermark.}
While producing watermarked text requires access to the language model, detecting the watermark does not. A third party with knowledge of the hash function and random number generator can re-produce the red list for each token and count how many times the red list rule is violated.
We can detect the watermark by testing the following null hypothesis, 
\begin{align} \label{null}
\begin{split}
\text{\em $H_0$: The text sequence is generated with} \\ \text{\em no knowledge of the red list rule.} 
\end{split}
\end{align}

Because the red list is chosen at random, a natural writer is expected to violate the red list rule with half of their tokens, while the watermarked model produces no violations. The probability that a natural source produces $T$ tokens without violating the red list rule is only $1/2^T,$ which is vanishingly small even for short text fragments with a dozen words.  This enables detection of the watermark (rejection of $H_0$) for, e.g., a synthetic tweet.   

A more robust detection approach uses a {\em one proportion z-test} to evaluate the null hypothesis. If the null hypothesis is true, then the number of green list tokens, denoted $|s|_G,$ has expected value $T/2$ and variance $T/4.$ The $z$-statistic for this test is  
\begin{align} \label{zformula}
z = 2(|s|_G - T/2)/\sqrt{T}.
\end{align}
We reject the null hypothesis and detect the watermark if $z$ is above a chosen threshold. Suppose we choose to reject the null hypothesis if $z>4.$  In this case, the probability of a false positive is $3\times 10^{-5},$ which is the one-sided p-value corresponding to $z>4.$ At the same time, we will detect any watermarked sequence with 16 or more tokens (the minimum value of $T$ that produces $z=4$ when $|s|_G$=T).

\textbf{How hard is it to remove the watermark?} \label{remove}
The use of the one proportion z-test makes removal of the watermark difficult.  Consider the case of a watermarked sequence of length $T=1000$. Suppose an adversary modifies 200 tokens in the sequence to add red list words and scrub the watermark. A modified token at position $t$ can violate the red list rule at position $t$.  Furthermore, the value of $s_t$ determines the red list for token $s_{t+1},$ and a maximally adversarial choice of $s_t$ will put $s_{t+1}$ in violation of the red list rule as well.  For this reason, 200 token flips can create at most 400 violations of the red list rule.  Unfortunately for the attacker, this maximally adversarial sequence with 600 remaining green list tokens still produces a z-statistic of $2(600-1000/2)/\sqrt{1000} \approx 6.3,$ and a p-value of $\approx 10^{-10},$ leaving the watermark readily detectable with extremely high confidence. In general, removing the watermark of a long sequence requires modifying roughly one quarter of the tokens or more. 

\looseness -1 Note the analysis above assumes the attacker has complete knowledge of the watermark, and each selected token is maximally adversarial (which likely has a negative impact on quality). Without knowledge of the watermark algorithm, each flipped token has only a 50\% chance of being in the red list, as does the adjacent token.  In this case, the attacker above only creates 200 red list words (in expectation) by modifying 200 tokens.  Methods for keeping the watermark algorithm secret but available via API are discussed in \cref{cryptosec}.

\textbf{Drawbacks of the hard red list rule.}
The hard red list rule handles low entropy sequences in a simple way; it prevents the language model from producing them. For example, the token ``Barack'' is almost deterministically followed by ``Obama'' in many text datasets, yet ``Obama'' may be disallowed by the red list.

A better behavior is to use a ``soft'' watermarking rule that is only active for high-entropy text that can be imperceptibly watermarked. As long as low-entropy sequences are wrapped inside a passage with enough total entropy, the passage will still easily trigger a watermark detector, solving the problem described in \cref{entropyproblem}. 
Further, one can combine the watermark with a beam search decoder that ``irons-in'' the watermark. By searching the hypothesis space of likely token sequences, 
candidates sequences with a high density of tokens in the green list are found, resulting in a high strength watermark with minimal perplexity cost.

\section{A more sophisticated watermark}\label{sec:soft-watermark}
We now discuss the ``soft'' watermark that promotes the use of the green list for high entropy tokens when many good choices are available, while having little impact on the choice of low-entropy tokens that are nearly deterministic. 

To derive this watermark, we examine what happens in the language model just before it produces a probability vector. The last layer of the language model outputs a vector of logits $l^{(t)}$.  These logits get converted into a probability vector $p^{(t)}$ using the softmax operator
 $$p^{(t)}_k = \exp(l^{(t)}_k)/\sum_i \exp(l^{(t)}_i).$$
Rather than strictly prohibiting the red list tokens, Algorithm \ref{soft} adds a constant $\delta$ to the logits of the green list tokens.

\begin{algorithm}[ht]
   \caption{Text Generation with Soft Red List}
   \label{soft}
\begin{algorithmic}
\STATE \textbf{Input:} prompt, $s^{(-N_p)}\cdots s^{(-1)}$

\hspace{10.5mm}green list size, $\gamma \in (0,1)$ 

\hspace{10.5mm}hardness parameter, $\delta > 0$
\FOR{$t=0,1,\cdots$}
 \STATE 
 \begin{enumerate}
\item Apply the language model to prior tokens $s^{(-N_p)}\cdots s^{(t-1)}$ to get a logit vector $l^{(t)}$ over the vocabulary.
\item Compute a hash of token $s^{(t-1)},$ and use it to seed a random number generator.
\item Using this random number generator, randomly partition  the vocabulary into a ``green list'' $G$ of size $\gamma |V|,$ and a ``red list'' $R$ of size $(1-\gamma)|V|$. 
\item Add $\delta$ to each green list logit.  Apply the softmax operator to these modified logits to get a probability distribution over the vocabulary.
 $$\hspace{-4mm}\hat p^{(t)}_k =
  \begin{cases}  \frac{ \exp(l^{(t)}_k+\delta)}{\sum_{i\in R} \exp(l^{(t)}_i)+\sum_{i\in G} \exp(l^{(t)}_i+\delta)}, \quad k\in G\\
  \frac{ \exp(l^{(t)}_k)}{\sum_{i\in R} \exp(l^{(t)}_i)+\sum_{i\in G} \exp(l^{(t)}_i+\delta)}, \quad k\in R. \label{logitboost}
  \end{cases}
$$
\item Sample the next token, $s^{(t)},$ using the watermarked distribution $\hat p^{(t)}.$  
\end{enumerate}
\ENDFOR
\end{algorithmic}
\end{algorithm}

\looseness -1 The soft red list rule adaptively enforces the watermark in situations where doing so will have little impact on quality, while almost ignoring the watermark rule in the low entropy case where there is a clear and unique choice of the ``best'' word.  A highly likely word with $p^{(t)}_k \approx 1$ has a much larger logit than other candidates, and this will remain the largest regardless of whether it is in the red list.  But when the entropy is high, there are many comparably large logits to choose from, and the $\delta$ rule has a large impact on the sampling distribution, strongly biasing the output towards the green list.

\subsection{Detecting the soft watermark}

The process for detecting the soft watermark is identical to that for the hard watermark.  We assume the null hypothesis \eqref{null} and compute a z-statistic using Equation \eqref{zformula}.  We reject the null hypothesis and detect the watermark if $z$ is greater than a threshold.  For arbitrary $\gamma$ we have 
\begin{align} \label{zformula_generalized}
z = (|s|_G - \gamma T)/\sqrt{T\gamma(1-\gamma)}.
\end{align}

\looseness -1 Consider again the case in which we detect the watermark for $z>4.$  Just like in the case of the hard watermark, we get false positives with rate $3\times 10^{-5}.$  In the case of the hard watermark, we could detect any watermarked sequence of length 16 tokens or more, regardless of the properties of the text.  However, in the case of the soft watermark our ability to detect synthetic text depends on the entropy of the sequence. High entropy sequences are detected with relatively few tokens, while low entropy sequences require more tokens for detection.   
Below, we rigorously analyze the detection sensitivity of the soft watermark, and its dependence on entropy.  
 
\section{Analysis of the soft watermark}
In this section, we examine the expected number of green list tokens used by a watermarked language model and analyze the dependence of this quantity on the entropy of a generated text fragment.  Our analysis assumes the red list is sampled uniformly at {\em random}. 
This is a deviation from the method used in practice, which generates red lists using a {\em pseudo-random} number generator seeded with previous tokens. The consequences of pseudo-random sampling are explored in Section \ref{cryptosec}.  We analyze the case in which text is generated by multinomial random sampling.  In our experiments, we consider two more sampling schemes, greedy decoding and beam search.

We need a definition of entropy that is appropriate for our analysis.  The strength of our watermark is weak when the distribution over tokens has a large ``spike'' concentrated on one or several tokens. We define the following type of entropy to quantify this phenomenon.
\begin{definition}
Given a discrete probability vector $p$ and a scalar $z,$ we define the {\em spike entropy} of $p$ with modulus $z$ as
 $$S(p,z) = \sum_k \frac{p_k}{1+zp_k}.$$
\end{definition}
Like the classical Shannon entropy, the spike entropy is a measure of how spread out a distribution is; The spike entropy assumes its minimal value of $\frac{1}{1+z}$  when the entire mass of $p$ is concentrated at a single location, and its maximal value of $\frac{N}{N+z}$ when the mass of $p$ is uniformly distributed. 
For large $z$, the value of $\frac{p_k}{1+zp_k} \approx 1/z$ when $p_k>1/z$ and $\approx 0$ for $p_k < 1/z.$ For this reason, one can interpret the spike entropy as a softened measure of the number of entries in $p$ greater than  $1/z.$

The following theorem predicts the number of green list tokens that appear in a sequence with the watermark.

\begin{theorem} \label{maintheorem}
Consider watermarked text sequences of $T$ tokens. Each sequence is produced by sequentially sampling a raw probability vector $p^{(t)}$ from the language model, sampling a random green list of size $\gamma N$, and boosting the green list logits by $\delta$ using Equation \ref{logitboost} before sampling each token.  Define $\alpha = \exp(\delta),$ and let $|s|_G$ denote the number of green list tokens in sequence $s.$

If a randomly generated watermarked sequence has average spike entropy at least $S^\star,$ i.e.,
   $$ \frac{1}{T} \sum_t S\left(p^{(t)},\frac{(1-\gamma)(\alpha - 1)}{ 1+(\alpha-1)\gamma}\right) \ge S^\star,$$
then the number of green list tokens in the sequence has expected value at least
   $$ \expect |s|_G  \ge \frac{\gamma\alpha T}{ 1+(\alpha-1)\gamma} S^\star,$$
 Furthermore, the number of green list tokens has variance at most
  $$ \text{Var}\,\, |s|_G \le T \frac{\gamma\alpha S^\star}{ 1+(\alpha-1)\gamma}  \left(1-\frac{\gamma\alpha S^\star}{ 1+(\alpha-1)\gamma} \right).$$
  If we have chosen $\gamma\ge .5,$ then we can use the strictly looser but simpler bound 
  $$ \text{Var}\,\, |s|_G  \le T\gamma (1-\gamma).$$
\end{theorem}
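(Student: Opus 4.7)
The plan is to reduce both claims to per-step quantities. By linearity, $\mathbb{E}|s|_G = \sum_{t=1}^T q_t$ with $q_t := \Pr[s^{(t)}\in G^{(t)}]$, so the crux is to show, conditional on $p^{(t)}$,
\begin{equation*}
q_t \;\ge\; \frac{\gamma\alpha}{1+(\alpha-1)\gamma}\, S\!\left(p^{(t)},\, \tfrac{(1-\gamma)(\alpha-1)}{1+(\alpha-1)\gamma}\right).
\end{equation*}
Summing over $t$ and invoking the average spike-entropy hypothesis then yields the stated lower bound $\mathbb{E}|s|_G \ge \tfrac{\gamma\alpha T}{1+(\alpha-1)\gamma}S^\star$.

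For the per-step inequality, I use the sampling rule to write $q_t = \mathbb{E}_G[f(P_G)]$ with $f(x)=\alpha x/(1+(\alpha-1)x)$ and $P_G=\sum_{k\in G}p_k^{(t)}$. Expanding $f(P_G) = \sum_{k\in G} \alpha p_k/(1+(\alpha-1)P_G)$, swapping sum with expectation, and conditioning on the event $k\in G$ (which has probability $\gamma$ by symmetry) reduces matters to lower-bounding $\mathbb{E}[\alpha/(1+(\alpha-1)P_G)\mid k\in G]$. This map is convex in the ``other mass'' $P_G - p_k$, so Jensen's inequality lower-bounds it by the same map evaluated at the conditional mean $\mathbb{E}[P_G-p_k\mid k\in G]=\gamma(1-p_k)$ (exact if green lists are formed by independent per-token inclusion, asymptotic for size-$\gamma N$ sampling as $N\to\infty$). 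The denominator then factors algebraically as $1+(\alpha-1)(\gamma+(1-\gamma)p_k) = [1+(\alpha-1)\gamma]\,[1+zp_k]$ with $z$ precisely the modulus appearing in the theorem, which is exactly the form needed to collect the spike entropy $S(p^{(t)},z)$ with the claimed prefactor.

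For the variance bound, write $|s|_G=\sum_t X_t$ with Bernoulli indicators $X_t$. Conditional on the realized sequence of raw distributions, the freshly drawn independent green lists render the $X_t$'s uncorrelated, so $\operatorname{Var}(|s|_G\mid p^{(\cdot)})=\sum_t q_t(1-q_t)$. Applying concavity of $x(1-x)$ bounds this sum by $T\bar q(1-\bar q)$ where $\bar q \ge \mu:=\gamma\alpha S^\star/(1+(\alpha-1)\gamma)$ by the expectation step; on the decreasing branch of $x(1-x)$ past $1/2$ this delivers $T\mu(1-\mu)$. The simpler bound for $\gamma\ge 1/2$ follows even more directly: since $f(x)\ge x$, one has $q_t\ge \gamma\ge 1/2$, so $q_t(1-q_t)\le \gamma(1-\gamma)$ uniformly in $t$.

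The hard part will be the per-step inequality above: $f$ itself is concave, so applying Jensen directly to $\mathbb{E}_G f(P_G)$ gives an upper bound --- the wrong direction. The trick is to split $f(P_G)$ as a sum over $k\in G$, pull out $\alpha p_k$ per summand, and only then apply Jensen to the \emph{convex} reciprocal $1/(1+(\alpha-1)P_G)$ on the conditional distribution of $P_G$ given $k\in G$. Spotting the algebraic factorization $1+(\alpha-1)(\gamma+(1-\gamma)p_k)=[1+(\alpha-1)\gamma][1+zp_k]$, and hence identifying the precise modulus $z$ at which the spike entropy enters, is the real crux of the argument.
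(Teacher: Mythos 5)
Your proposal is correct and follows essentially the same route as the paper's proof: reduce to the per-step green-list probability, split $f(P_G)$ as a sum over $k\in G$, apply Jensen to the convex reciprocal conditional on $k\in G$, factor the denominator to reveal the spike-entropy modulus, and bound the variance of the sum of Bernoulli indicators via concavity of $x(1-x)$ together with the uniform bound $q_t\ge\gamma$. The one place you flag as an approximation is in fact not a gap: for size-$\gamma N$ sampling one has $\mathbb{E}[P_G-p_k\mid k\in G]=\frac{N_G-1}{N-1}(1-p_k)\le\gamma(1-p_k)$, and since the post-Jensen expression is decreasing in this quantity the lower bound on $q_t$ still holds exactly; the paper reaches the same conclusion by computing with the averaged vector $\bar p$ and then adding $\alpha$ to numerator and denominator.
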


\noindent \textbf{Remark.}  It may seem like there are a lot of messy constants floating around in this bound. However, when we choose $\gamma=\frac{1}{2}$ and $\delta = \ln(2) \approx 0.7,$ this bound simplifies to
$$
\expect |s|_G  \ge \frac{2}{3}T S^\star, \,\,\, \text{Var}\,\, |s|_G \le \frac{2}{3}T  S^\star   \left(1- \frac{2}{3} S^\star  \right)
$$
where $S^\star$ is a bound on spike entropy with modulus 1/3.
If we study the ``hard'' red list rules by choosing $\gamma=\frac{1}{2}$ and letting $\delta \to \infty,$ we have 
$$
\expect |s|_G  \ge T S^\star, \,\,\, \text{Var}\,\, |s|_G \le T S^\star   \left(1- S^\star  \right)
$$
where $S^\star$ is a bound on spike entropy with modulus 1. 

\subsection{Sensitivity of the watermark test}\label{sec:sensitivity}
The sensitivity of the soft watermark can be computed using standard type-II error analysis. For illustrative purposes, we estimate the type-II (false negative) error rate of a soft watermark with $\gamma=.5$ and $\delta=2.$ We assume 200 tokens are generated using OPT-1.3B \citep{zhang2022opt} using prompts from the C4 dataset's RealNewsLike subset \citep{2019t5}. We also assume a detection threshold of $z=4$ (which occurs at $\sim128.2/100$ tokens) which gives us a type-I error (false positive) rate of $3\times 10^{-5}$. 

\textbf{Theoretical bound.} 
 Our generations have an average spike entropy per sample of $S=0.807$ over $\sim500$ generations. \Cref{maintheorem} says that the expected number of green list tokens per generation is {\em at least} $142.2$. Indeed, the empirical average is $159.5$. For sequences with entropy equal to the mean ($S=0.807$) we get $\sigma\le 6.41$ tokens, and
98.6\% sensitivity (1.4\% type-II error rate), using a standard Gaussian approximation for the green list count. Note, this is a {\em lower} bound on the sensitivity for this particular entropy. If we use the true empirical mean of $159.5$ rather than the theoretical bound, we get a $5.3\times 10^{-7}$ type-II error rate, a realistic approximation but not a rigorous lower bound. 

\textbf{Empirical sensitivity.} Empirically, $98.4\%$ of generations are detected at the $z=4$ ($128$ token) threshold when multinomial sampling is used.  When $4$-way beam search over a greedy decoding is used, we get $99.6\%$ empirical sensitivity. Unlike the theoretical bounds, these are computed over all generations, which have the same length but vary in their individual entropies. Here, the primary source of type-II errors is low entropy sequences, as calculations above show that we expect a very low error rate when the entropy lies near the mean. To validate this, we examine the subset of 375/500 generations that have spike entropy above the $25$th percentile, of which we detect $100\%$ of generations at the $z=4$ threshold.

\textbf{What do failure cases look like?} We display typical success and failure cases for the watermark in \cref{tab:demo-examples}.  We observe that low-entropy (undetectable) sequences typically involve data memorization; the model regurgitates a copy (or near copy) of human-written text which is therefore not detectable as machine-written. A detailed exploration of model accuracy is presented in Section \ref{sec:experiments}, with more generation examples provided in \cref{sec:sample-outputs}.

\textbf{Evaluating Repetitive Text.}
A subtlety of the proposed approach is that tokens in the green list are only pseudo-random, and $n$-grams of text that are repeated will always be scored in the same manner.
Assume a $2$-gram, such as ``Barack Obama'' happens to green-list ``Obama''. Repetitive usage of this $2$-gram would result in a higher than expected number of green tokens.
In a worst-case scenario, human-generated text with a high number of repetitions of this 2-gram may be erroneously flagged as machine-generated. 

\looseness -1 Two remedies are possible: The first is to simply increase the length $h$ of the PRNG function, thereby increasing the variability of the green-listed words, as larger $(h+1)$-grams are much less likely to be repeated. A better remedy (possibly used in conjunction with the first) is not to count repeated $n$-grams when checking for the watermark. In the example above, the 2-gram ``Barack Obama'' would be counted on its first occurrence, and then subsequently ignored when it appears again; it is counted as neither green nor red, and the token counter $T$ is not incremented.

In addition to preventing false positives, skipping repeated $n$-grams can also make the detector more sensitive. A repeated $n$-gram is likely to be low-entropy, and so it will not contribute to the strength of the watermark.  By excluding these from the count, we keep the green list fraction high and maintain high sensitivity.

\begin{table*}[t]
\tiny
\begin{tabular}{p{3.1cm}|p{3.1cm}|p{3.1cm}|p{3.1cm}|p{0.28cm}|p{0.44cm}|p{0.3cm}|p{0.3cm}}
\toprule
prompt & real completion &  no watermark (NW) &  watermarked (W) & $S$ & (W) $z$ & (NW) PPL & (W) PPL \\
\midrule
 ...tled out of court and publicly reconciled.\textbackslash nIn the ’80s the band’s popularity waned in the United States but remained strong abroad. Robin released three solo albums, with limited success. The Bee Gees &   returned with some moderate hits in the late 1990s and were inducted into the Rock and Roll Hall of Fame in 1997. With his brothers, Mr. Gibb won six Grammys.\textbackslash nIn addition to his wife and his brother [...continues] &   continued to tour, and Barry became a television producer.\textbackslash nBut in the early ’90s, the Bee Gees’ popularity remained high. They scored a hit with “Don’t Stop Believing” in 1990, and in 1992 the Bee Ge[...continues] &  ’ 1990 album, “Spirits of the Century,” was a mixed critical and commercial success.\textbackslash nWhen the brothers were nominated for a Grammy Award in 1990, Mr. Gibb’s “You Should Be Dancing” and “Massachusetts,[...continues] &  0.68 &       \hlgreen{12.73} &       3.15 &      1.93 \\\midrule
 ... logged into their Google account and have verified profiles that match queries for the site.\textbackslash nGoogle's John Mueller said there is no ranking benefit in using different Google Search Console and Google &    Analytics accounts for each individual web site you manage. The topic came up before, as long as you are not spamming Google - there also is no down side to using the same accounts across multiple we[...continues] &   Analytics data to calculate ranking sentiment.\textbackslash nGoogle have a new feature called Keyword Difficulty Analysis that shows you the keywords your competitors are ranking for. It shows the demand curve as [...continues] & + accounts to see different numbers.\textbackslash nGoogle also released their own great blog post on the news algorithm. They give lots of great advice to help your site do better.\textbackslash nFinally, at the end of September [...continues] &  0.77 &       \hlgreen{11.88} &       9.24 &      4.10 \\\midrule
 ...1 on the season with a 2.76 GAA and.906 save percentage.\textbackslash nMar. 11 5:28 PM PT6:28 PM MT7:28 PM CT8:28 PM ET0:28 GMT8:28 5:28 PM MST6:28 PM CST7:28 PM EST4:28 UAE (+1)20:28 ET21:28 BRT - Sergei Bobrovsky &   stopped 18 of 19 shots in the Blue Jackets' 2-0 loss to the Islanders on Monday. The defeat dropped him to 29-22-1 on the season with a 2.74 GAA and.907 save percentage.\textbackslash nMar. 9 5:41 PM PT6:41 PM MT7:[...continues] &   stopped 28 of 30 shots in the Blue Jackets' 3-2 shootout loss to the Islanders on Sunday. The loss dropped him to 30-22-1 on the season with a 2.76 GAA and.906 save percentage.\textbackslash nMar. 10 6:15 PM PT7:15[...continues] &   stopped 30 of 34 shots in the Blue Jackets' 6-2 win over the Canadiens on Monday. The victory pushed him to 31-21-1 on the season with a 2.72 GAA and.908 save percentage.\textbackslash nMar. 10 11:49 AM PT12:49 PM [...continues] &     0.62 &     \hlred{2.40} &       1.33 &      1.45 \\\midrule
...cond season at Hall Bros Oval.\textbackslash nThe defender also admitted his surprise at Young’s run to the finals but credited the injection of youth into the side.\textbackslash n“We were really in a building phase last year and &  we copped a few floggings with all those juniors blokes coming in,” Galvin said.\textbackslash n“Now, we’ve kept that core group together for two years and I think we’ve come along quicker than we anticipated.\textbackslash nROCK[...continues] &  we copped a few floggings with all those juniors blokes coming in,” Galvin said.\textbackslash n“Now, we’ve kept that core group together for two years and I think we’ve come along quicker than we anticipated.\textbackslash n“Tha[...continues] &  we copped a few floggings with all those juniors blokes coming in,” Galvin said.\textbackslash n“Now, we’ve kept that core group together for two years and I think we’ve come along quicker than we anticipated.\textbackslash n“Tha[...continues] &     0.58 & \hlred{-1.13} &       1.05 &      1.04 \\
\bottomrule
\end{tabular}
\caption{Selected outputs from non-watermarked (NW) and watermarked (W) multinomial sampling using $\gamma=0.5$ and $\delta=2.0$. The examples in the first two rows have high entropy and correspondingly high $z$-scores, without any perceptible degradation in output quality. \textit{The two lower rows are failure cases where the watermark is too weak to be detected} -- they have low entropy and corresponding low $z$-scores. 
Anecdotally, failure cases typically seem to involve  data memorization in which the model regurgitates a near-copy of human text.  Note the output similarity between the generated and ``real'' human text in the bottom two rows. Memorization leads to large, high confidence logit values that constrain the outputs. 
Another common factor in failure cases is templated outputs (see the date/time formatting in row 3) that constrain model choices. 
}
\label{tab:demo-examples}
\vspace{-.2cm}
\end{table*}

\subsection{Impact on quality of generated text} 
A soft watermark has very little impact on the perplexity of tokens with extremely high or low entropy. When the distribution produced by the language model is uniform (maximal entropy), the randomness of the green list results in tokens being uniformly sampled, and the perplexity remains untouched.  Conversely, in the case of minimal entropy, where all probability mass is concentrated on a single token, the soft watermark rule has no effect and there is once again no impact on perplexity.

The watermark rule does impact perplexity for tokens of moderate entropy.  In this case, we can provide the following simple bound that holds uniformly over all entropy values. 

\begin{theorem} \label{perpbound}
Consider a sequence $s^{(i)}, -N_p<i<T.$ Suppose the (non-watermarked) language model produces a probability vector $p^{(T)}$ for the token at position $T.$  The watermarked model predicts the token at position $T$ using modified probability vector $\hat p^{(T)}.$  The expected perplexity of the $T$th token with respect to the randomness of the red list partition is 
 $$\expect_{G, R} \sum_k \hat p^{(T)}_k \ln(p^{(T)}_k) \le (1+(\alpha-1)\gamma)P^*,$$
 \looseness -1 where $P^*=\sum_k p^{(T)}_k \ln(p^{(T)}_k)$ is the perplexity of the original model.
\end{theorem}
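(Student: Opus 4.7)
The plan is a three-step argument: (i) rewrite $\hat p^{(T)}$ in a form that isolates the randomness of the green list, (ii) obtain a deterministic, pointwise bound on $\hat p_k^{(T)}\ln p_k^{(T)}$ that holds for every realization of $(G,R)$, and (iii) integrate over the random partition.

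For step (i), using $\exp(l_k^{(T)}+\delta)=\alpha\exp(l_k^{(T)})$ and dividing numerator and denominator of \eqref{logitboost} by $\sum_i\exp(l_i^{(T)})$, the watermarked distribution collapses to $\hat p_k^{(T)}=p_k^{(T)}\bigl(1+(\alpha-1)\mathbf{1}[k\in G]\bigr)/D$, where $D:=1+(\alpha-1)\sum_{i\in G}p_i^{(T)}$. Two features of this representation drive the rest of the argument: the numerator is linear in the indicator $\mathbf{1}[k\in G]$, whose marginal expectation over the random partition is exactly $\gamma$; and the random denominator satisfies $D\ge 1$ with probability one, since $\alpha\ge 1$ and the $p_i^{(T)}$ are nonnegative.

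For step (ii) I would use $D\ge 1$ to deduce the pointwise inequality $\hat p_k^{(T)}/p_k^{(T)}\le 1+(\alpha-1)\mathbf{1}[k\in G]$; multiplying through by the non-positive quantity $p_k^{(T)}\ln p_k^{(T)}$ reverses this inequality, and summing over $k$ yields the per-partition estimate $\sum_k\hat p_k^{(T)}\ln p_k^{(T)}\ge P^*+(\alpha-1)\sum_{k\in G}p_k^{(T)}\ln p_k^{(T)}$. For step (iii), by symmetry of the random partition each token lies in $G$ with marginal probability $\gamma$, so linearity of expectation gives $\expect\sum_{k\in G}p_k^{(T)}\ln p_k^{(T)}=\gamma P^*$, and combining with step (ii) delivers $\expect\sum_k\hat p_k^{(T)}\ln p_k^{(T)}\ge (1+(\alpha-1)\gamma)P^*$, which matches the stated bound once one tracks that both sides are non-positive (the ``perplexity'' of the theorem is really a sum of $p\ln p$ terms with the usual negative sign).

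The main obstacle I anticipate is the random denominator $D$, which is correlated with the indicators $\mathbf{1}[k\in G]$; a direct integration of $\hat p_k^{(T)}/p_k^{(T)}$ would force a Jensen-type argument on $1/D$ or an explicit second-moment calculation over the random subset of fixed size $\gamma|\vocab|$. The crude bound $D\ge 1$ sidesteps this entirely, and is harmless here because $\ln p_k^{(T)}\le 0$ makes the loss of information go in the favorable direction: the sharper inequality $D\ge 1+(\alpha-1)G_p$ could only strengthen the resulting bound, so the coarse version is enough. Once $D$ is eliminated, the remainder of the proof is just linearity of expectation and the single symmetry fact $\Pr[k\in G]=\gamma$.
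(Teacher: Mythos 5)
Your argument is correct and is in substance the same as the paper's: both proofs rest on writing the watermarked denominator as $D = \sum_{i\in R} p_i^{(T)} + \alpha\sum_{i\in G} p_i^{(T)} = 1 + (\alpha-1)\sum_{i\in G} p_i^{(T)} \ge 1$, using this to bound $\hat p_k^{(T)}$ above by $p_k^{(T)}\bigl(1+(\alpha-1)\mathbf{1}[k\in G]\bigr)$, and then invoking $\prob[k\in G]=\gamma$ and linearity of expectation. The only presentational difference is that the paper takes $\expect_{G,R}$ of $\hat p_k^{(T)}$ first (splitting into the $k\in G$ and $k\in R$ cases) whereas you establish the pointwise bound first and integrate at the very end; these are equivalent.

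One thing you should state more forcefully rather than hedge: your derivation correctly produces $\expect_{G,R}\sum_k \hat p_k^{(T)}\ln p_k^{(T)} \ge (1+(\alpha-1)\gamma)P^*$, with a $\ge$, and this is \emph{not} the same inequality as the $\le$ printed in the theorem. The flip is forced: from $\expect_{G,R}\hat p_k^{(T)} \le (1+(\alpha-1)\gamma)p_k^{(T)}$, multiplying by $\ln p_k^{(T)}\le 0$ reverses the direction. The paper's own proof commits exactly this sign error in its final line (carrying $\le$ through unchanged), and the theorem statement inherits it. The content that is actually provable, and the content that is meaningful, is the $\ge$ direction you obtained; equivalently, writing cross-entropy and entropy with their conventional minus signs, $-\expect_{G,R}\sum_k \hat p_k^{(T)}\ln p_k^{(T)} \le (1+(\alpha-1)\gamma)\bigl(-P^*\bigr)$, i.e., the expected cross-entropy of the watermarked sampler against the original model grows by at most a factor $(1+(\alpha-1)\gamma)$. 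So your instinct that the discrepancy is a sign-convention artifact is right, but it is worth saying plainly that the displayed inequality in the theorem is reversed as literally written, and that your $\ge$ is the correct statement (and what the paper's own argument in fact establishes).
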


\section{Private Watermarking} 
\label{cryptosec}
\looseness -1  The watermark algorithms above are designed to be \textit{public}.
A watermark can also be operated in \textit{private mode}, in which the algorithm uses a random key that 
is kept secret and hosted behind a secure API. If the attacker has no knowledge of the key used to produce the red list, it becomes more difficult for the attacker to remove the watermark as the attacker does not know which tokens are in the red list. However, testing for the presence of the watermark now requires using the same secure API and, if this API is public, access needs to be monitored to prevent an adversary from making too many queries using minor variants of the same sequence. 

\sloppy
Let $F$ be a pseudorandom function (PRF) that, for simplicity, we view as accepting arbitrary length inputs and producing output as long as needed. $F$ could be a standard block cipher like AES or a cryptographic hash function like~SHA3. To create a private watermark, we first choose a random key $\key$; 
a private red list for token $s^{(t)}$ can then be generated in a manner similar to what was described earlier, but now by first computing $F_{\key}(s^{(t-h)},\cdots, s^{(t-1)})$, a pseudorandom function evaluated on the prior $h$ tokens. 

An attacker can discover the watermarking rules by observing occurrences of token tuples in generated text and tabulating the frequencies of the immediately subsequent tokens, even if the underlying key is unknown. To tabulate every red list in such a brute-force attack, $|\vocab|^{1 + h}$ tokens need to be submitted to the detection API.
When $h=1$, the red lists produced by many tokens could be discovered (at least partially) with conceivable effort. 
This brute-force method is ineffective for $h \gg 1$, as there is now a unique red list for each ordered combination of words.  
At the same time, large values of $h$ decrease watermark robustness when a naive method is used.  When, say, $h=5$ consecutive tokens are used to produce a red list, an adversarial change to just one of those tokens randomizes the red list for 5 different downstream tokens, increasing the number of red list words by $2.5$ (in expectation) if $\gamma=.5$. We call this downstream impact {\em attack amplification}. To limit amplification, we suggest using a small window ($h=2$ or~$3$) when using the naive watermarking rule. 

\begin{algorithm}[t]
   \caption{Robust Private Watermarking}
   \label{alg:private_robust_watermark}
\begin{algorithmic}
\STATE \textbf{Input:} prompt $s^{(-N_p)}\cdots s^{(-1)}$

\hspace{10.5mm}PRF $F$ with key $\key$

\hspace{10.5mm}hardness parameter $\delta>0$

\hspace{10.5mm}window width $h>0$

\FOR{$t=0,1,\cdots$}
 \STATE 
 \begin{enumerate}
\item Apply the language model to $s^{(-N_p)}\cdots s^{(t-1)}$ to get a logit vector $l^{(t)}$ over the vocabulary. 
\item  Sort the vocabulary so $l^{(t)}$ is in descending order. Set $k = 0$, the index of the most likely token.
\item Temporarily set $s^{(t)}$ to be the $k$th token in the vocabulary. Compute
$$H_i = F_{\key}(s^{(t)},s^{(t-i)}) \text{ for } 1\le i \le h.$$
\item Set $i^\star = \arg\min_{i>0} H_i$.
\item Using $H_{i^\star}$ as a seed, produce a random bit to decide if token $k$ is on the green or red list.
\end{enumerate}
\IF{ green list is chosen}
\STATE keep $s^{(t)}$ and continue.
\ELSIF{ red list is chosen, and $l^{(t)}_{k+1}< l^{(t)}_0 - \delta,$} 
\STATE choose  $s^{(t)}$ to be the most likely ($k=0$) token, which is in the red list, and continue.
\ELSE 
\STATE set $k\gets k+1,$ \textbf{goto} to step 3.
\ENDIF
\ENDFOR
\end{algorithmic}
\end{algorithm}

\vspace{.5cm}
\looseness -1 When a wider window $h$ is desired, more complex, \textit{robust watermarking rules} can achieve security against brute-force attacks without attack amplification. We describe such a rule in \cref{alg:private_robust_watermark}.  Here, the red list for $s^{(t)}$ depends on {\em itself}, and additionally on one prior token $s^{(t-i^\star)}$ chosen using a pseudo-random rule. To satisfy this self-hash condition, we iteratively test different tokens as $s^{(t)},$ from highest logit to least logit, until the red list rule is satisfied.  If, during this search, the logit of the test token falls by more than $\delta$, we give up and accept the token in the red list with largest logit.

\cref{alg:private_robust_watermark} has several nice security properties. When one of the prior $h$ tokens is changed, the watermark at position $t$ changes with probability only~$1/h$. As such, this rule is free of attack amplification; in expectation, a change to a token results in one additional red list token.\footnote{When $\gamma=.5$, the flipped token is in the red list $1/2$ of the time, and one of the $h$ downstream red lists is expected to randomize, resulting in another $1/2$ red list token.} 
Like the naive method with $h=2$, there are $|\vocab|^2$ unique red lists, but now the choice of the index $i^\star$ depends on combinations of $s^{(t)}$ and all $h$ tokens before it, which hides the choice of tokens used as input to~$F$.  For simplicity, \cref{alg:private_robust_watermark} is presented as a greedy sampler, but can be easily extended to handle multinomial sampling or beam search. 

\textbf{Boosting Watermark Privacy with Multiple Keys.}
A straightforward add-on to significantly boost the difficulty of brute-forcing a hidden watermark scheme is to have $k>1$ different hidden keys, and to randomly choose one for each generation.  At detection time, we run $k$ tests, one for each of the possible keys. This comes at the cost of only a minor decrease in power, as we need to correct for multiple hypotheses, for example via Bonferroni correction. %Even a nominally insecure watermark with a small window size ($h=1$) can be made much harder to break when multiple keys are used. In the simplest setup, one of the keys is sampled randomly at generation time for every token separately, and at detection time all keys are tested. 
%However, this increases the fraction of expected hits for unwatermarked text from $\gamma$ to $ 1-(1 -\gamma)^k$. 
%Nevertheless, this reduction in power can be remedied by switching randomly, but not at every token, i.e. every 100-200 tokens. 
When $\gamma=.5$ (half the vocabulary is colored green), lists should further be constructed so that each word is green for $k/2$ of the keys and red for the other $k/2.$  In this way, the lists can no longer be discovered by brute forced frequency analysis, as there is no observable bias when averaging over a large number of separately generated strings.   
%Finally, the optimal setup would not choose these $k$ lists at random, but counter-balance them, so that frequency analysis of the $n$-grams of text with multiple keys would exactly return the expected natural distribution of $n$-grams.

%A range of more complex mechanisms are possible with different efficiency and security tradeoffs, but we leave detailed consideration to future research.

\vspace{-.2cm}
\section{Experiments} \label{sec:experiments}

\begin{figure*}[h]

\begin{center}
\includegraphics[width=.95\columnwidth]{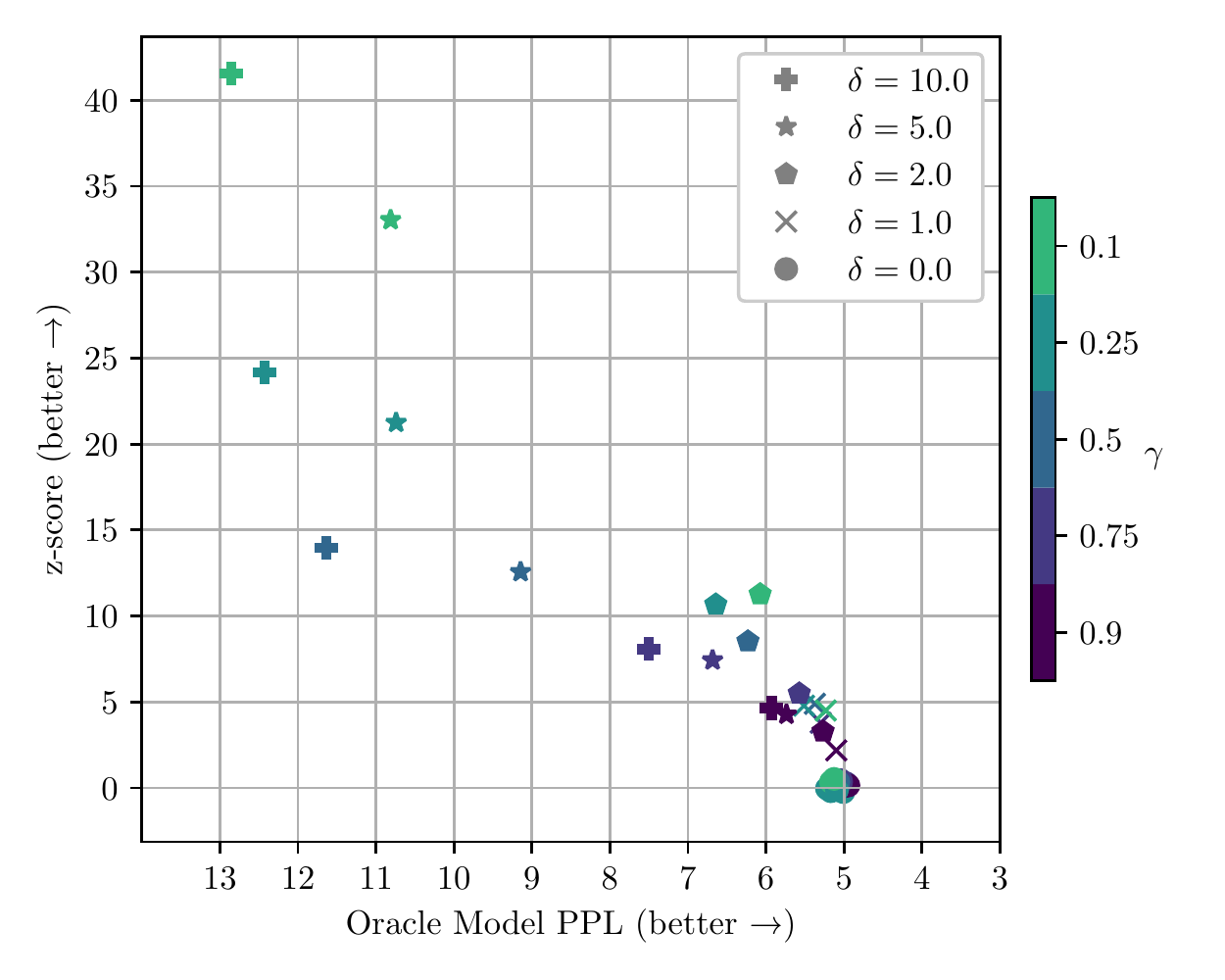}
\includegraphics[width=.95\columnwidth]{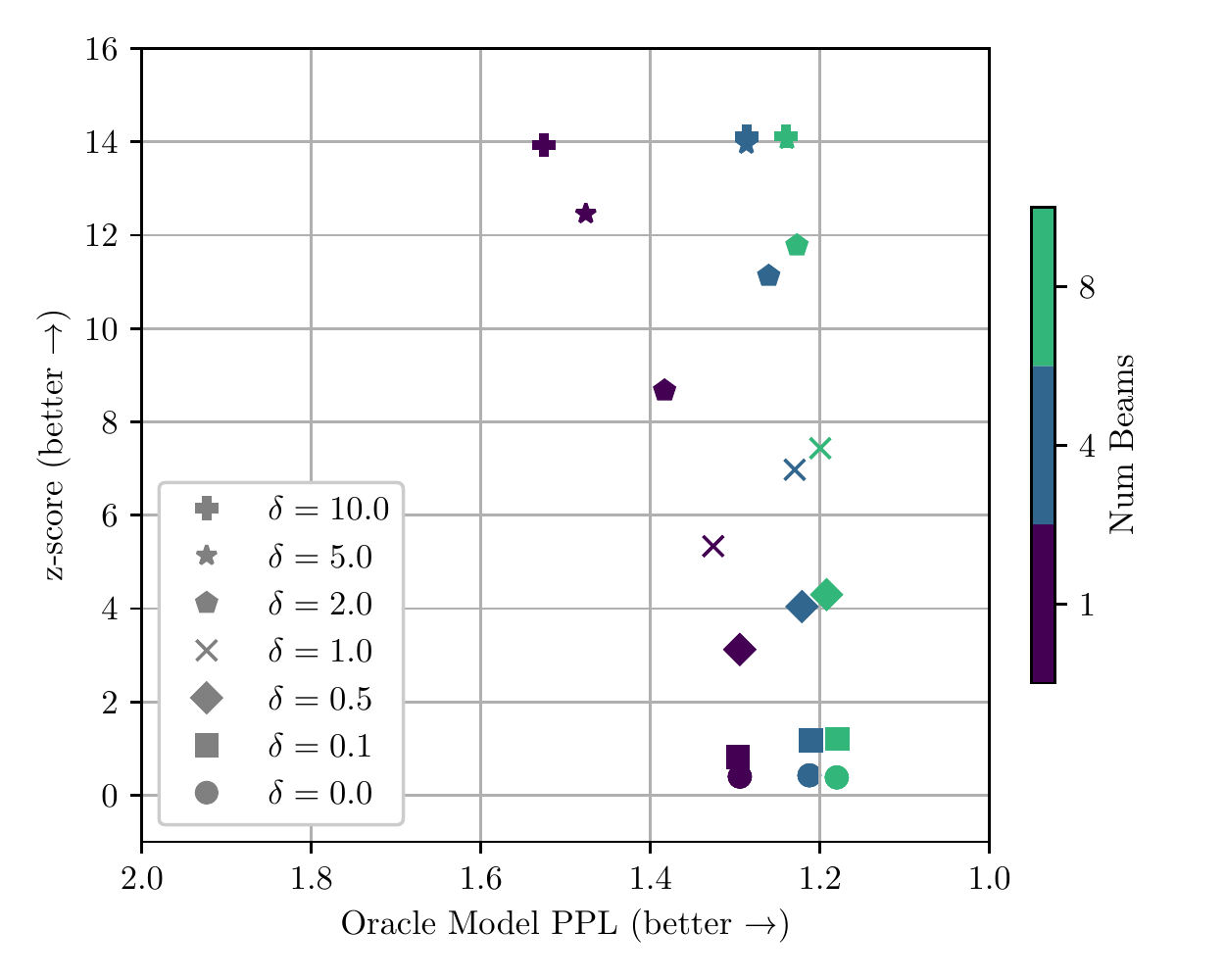}
\vspace{-4mm}
\caption{The tradeoff between average z-score and language model perplexity for $T=200\pm5$ tokens. 
 (left) Multinomial sampling. (right)  Greedy and beam search with 4 and 8 beams for $\gamma=.5$. 
Beam search promotes higher green list usage and thus larger $z$-scores with smaller impact to model quality (perplexity, PPL). 
}
\label{fig:pareto}
\end{center}
\vskip -0.2in
\end{figure*}

\begin{figure*}[h]
    \centering
    \subfigure[]{\includegraphics[width=0.32\textwidth,trim={0.5cm 0.4cm 0.1cm 0},clip]{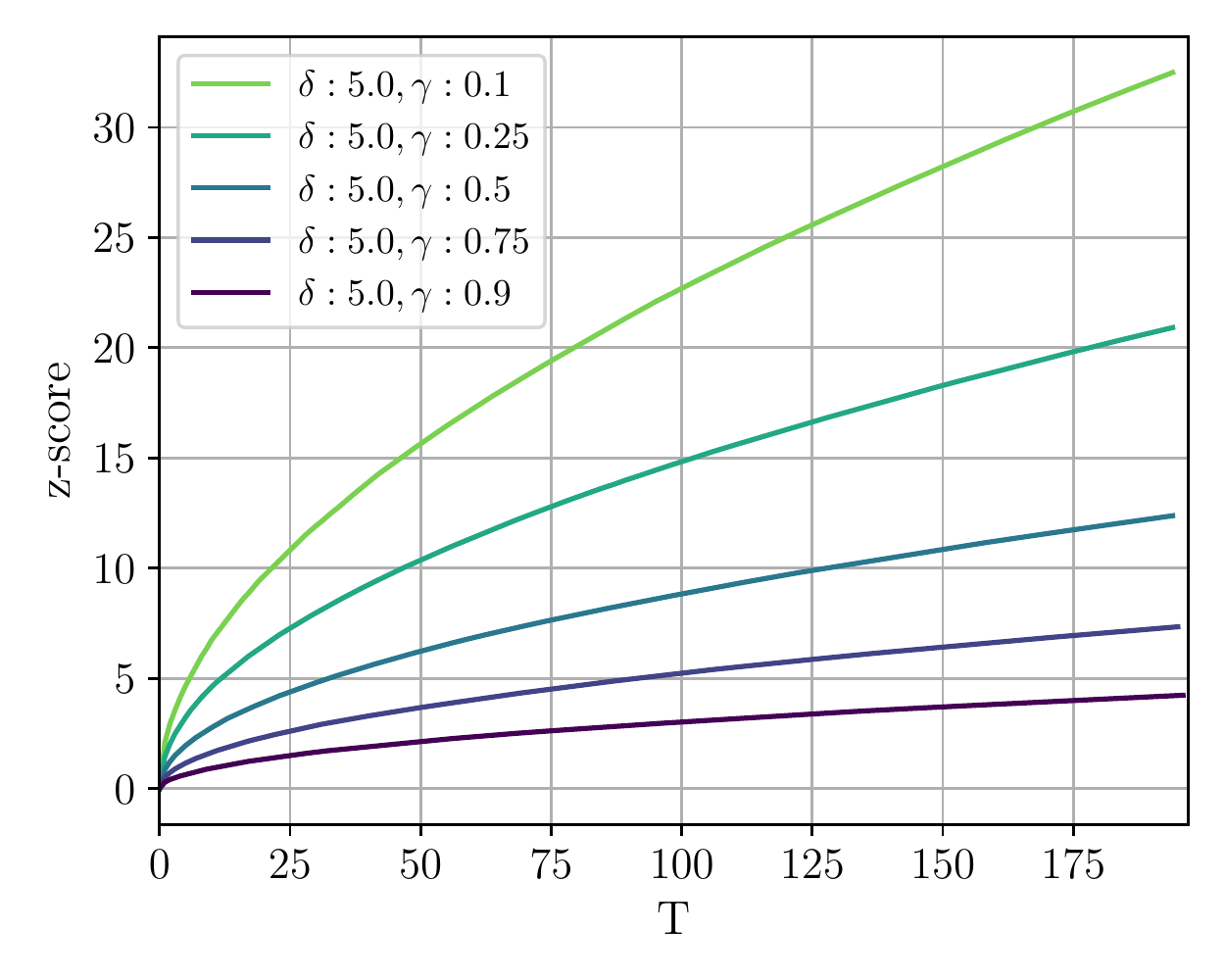}\label{fig:z-vs-t-ablate-gamma}}
    \subfigure[]{\includegraphics[width=0.32\textwidth,trim={0.5cm 0.4cm 0.1cm 0},clip]{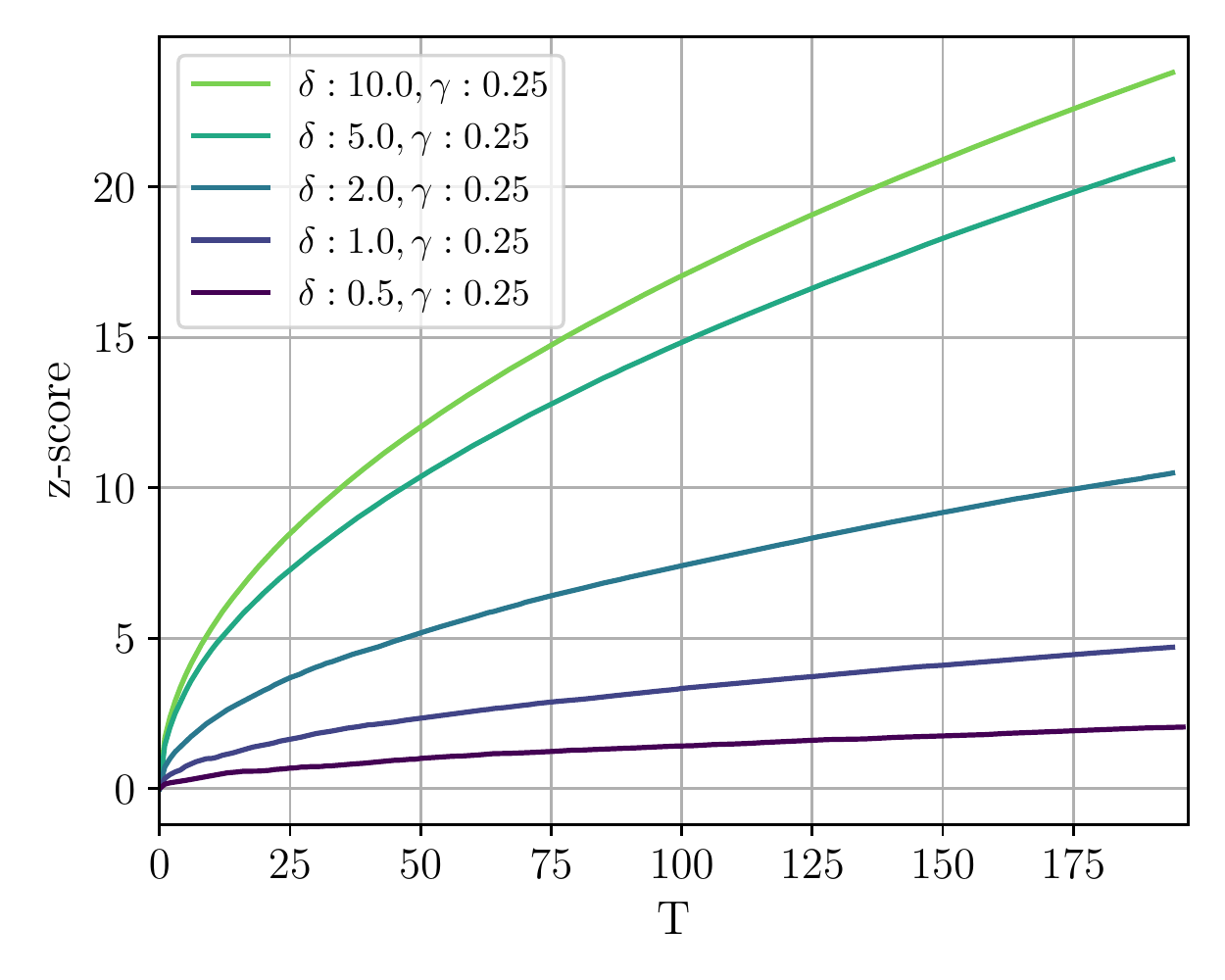}\label{fig:z-vs-t-ablate-delta}}
    \subfigure[]{\includegraphics[width=0.32\textwidth,trim={0.5cm 0.4cm 0.1cm 0},clip]{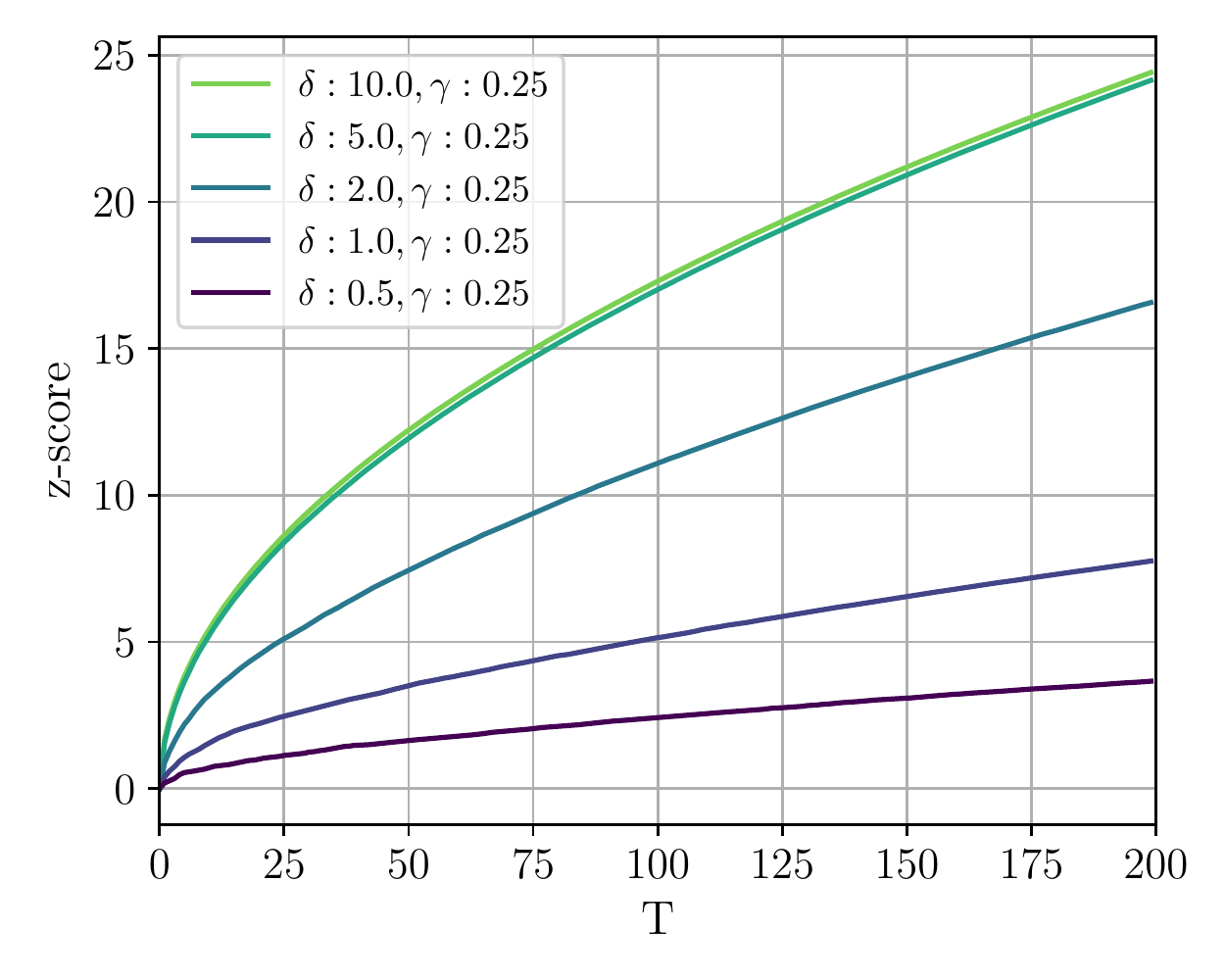}\label{fig:z-vs-t-beams-ablate-delta}}
    \vspace{-3mm}
    \caption{The average $z$-score as a function of $T$ the token length of the generated text.  (a) The dependence of the $z$-score on the green list size parameter $\gamma$, under multinomial sampling. (b) The effect of $\delta$ on $z$-score, under multinomial sampling. (c) The impact of the green list size parameter $\gamma$ on the $z$-score, but with greedy decoding using 8-way beam search. }
    \label{fig:z-vs-t}
    \vspace{-3mm}
\end{figure*}

In this section we explore the behavior of the watermark using the OPT-1.3B model~\citep{zhang2022opt}. We measure watermark strength using the rate of type-I errors (human text falsely flagged as watermarked) and type-II errors (watermarked text not detected).  

We implement the proposed watermark using the Pytorch backend of the Huggingface library \citep{wolf_huggingfaces_2020}. The \texttt{generate} API provides useful abstractions, including modules for \textit{warping} the logit distribution that comes out of the language model. 
We generate red lists using the torch random number generator and one previous token as described in \cref{sec:soft-watermark}.

\textbf{Datasets and Prompts.}
To simulate a variety of realistic language modeling scenarios we slice and dice a random selection of texts from the news-like subset of  the C4 dataset \citep{2019t5}. 
For each random string, we trim a fixed length of tokens from the end and treat them as a ``baseline'' completion.  The remaining tokens are a prompt. For the experimental runs using multinomial sampling, we pull examples from the dataset until we achieve at least 500 of generations with length $T=200\pm5$ tokens. In the runs using greedy and beam search decoding, we suppress the EOS token during generation  to combat the tendency of beam search to generate short sequences. We then truncate all sequences to $T=200$.
A larger \textit{oracle} language model (OPT-2.7B) is used to compute perplexity (PPL) for the generated completions and for the human baseline. 

\textbf{Watermark Strength vs Text Quality.}
One can achieve a very strong watermark for short sequences by choosing a small green list size $\gamma$ and a large green list bias $\delta.$  However, creating a stronger watermark may distort generated text.  \cref{fig:pareto} (left) shows the tradeoff between watermark strength ($z$-score) and text quality (perplexity) for various combinations of watermarking parameters. We compute results using $500\pm10$ sequences of length $T=200\pm5$ tokens for each parameter choice.  Interestingly, we see that a small green list, $\gamma=.1$ is pareto-optimal. 

In addition to these quantitative results, we show examples of real prompts and watermarked outputs in \cref{tab:demo-examples} to provide a qualitative sense for the behavior of the test statistic and quality measurement on different kinds of prompts. Additional examples are compiled in \cref{sec:sample-outputs}.

\begin{figure*}
    \centering
    \subfigure[]{\includegraphics[width=0.24\textwidth,trim={0.42cm 0.1cm 0.9cm 0.5cm},clip]{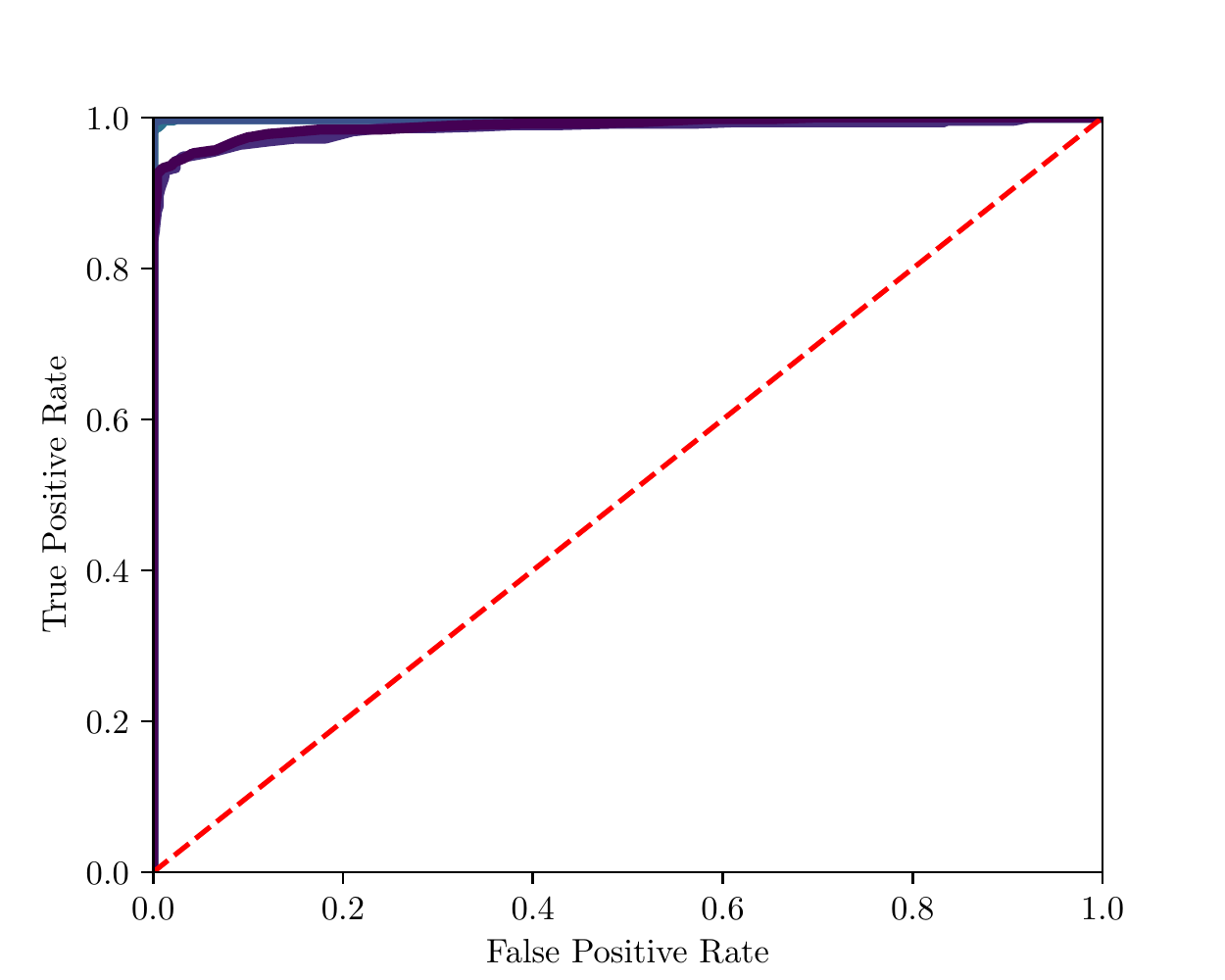}}%\label{fig:roc-auc}}
    \subfigure[]{\includegraphics[width=0.24\textwidth,trim={0.42cm 0.1cm 0.9cm 0.5cm},clip]{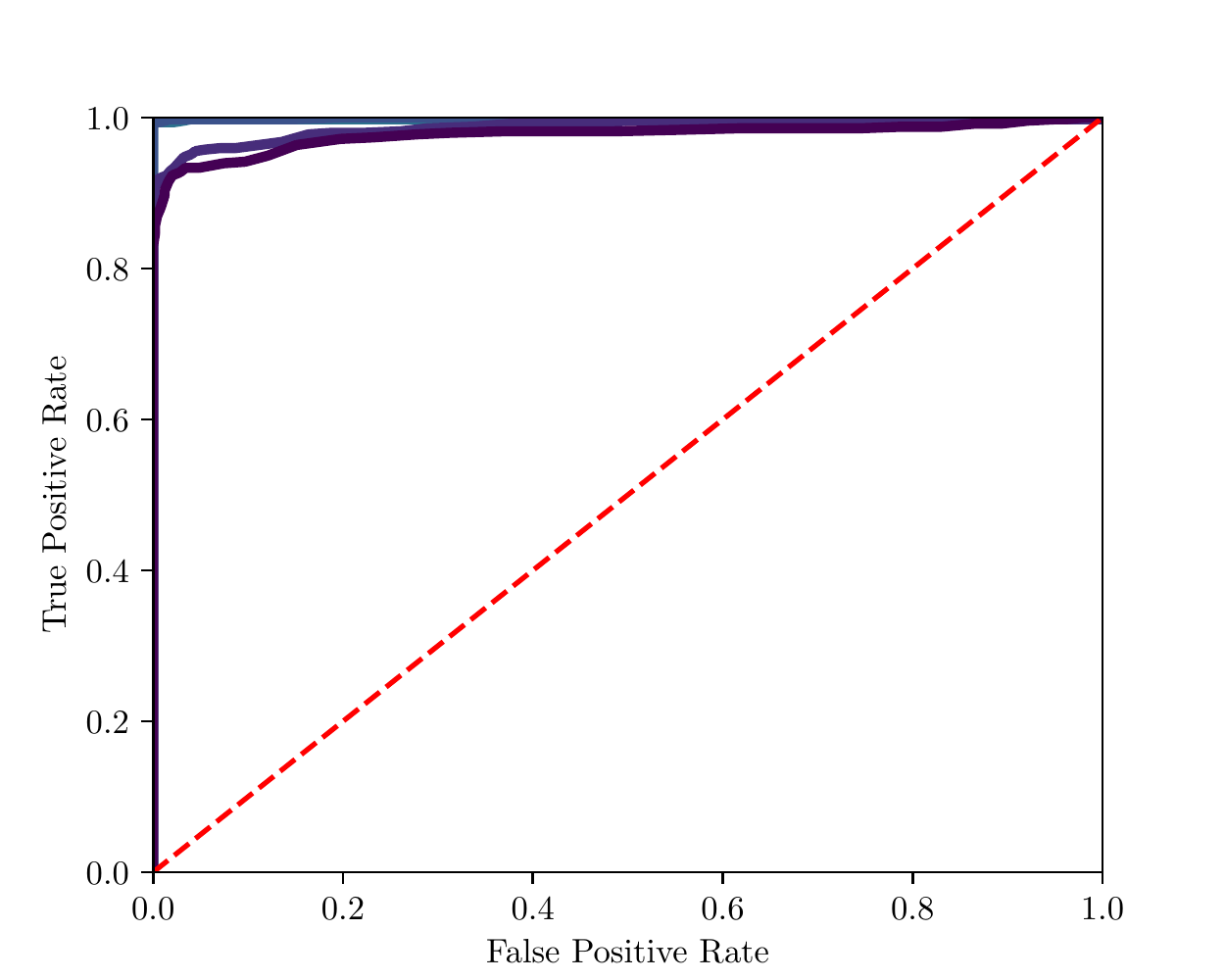}%\label{fig:roc-auc-beams}
    }
    \subfigure[]{\includegraphics[width=0.24\textwidth,trim={0.42cm 0.1cm 0.9cm 0.9cm},clip]{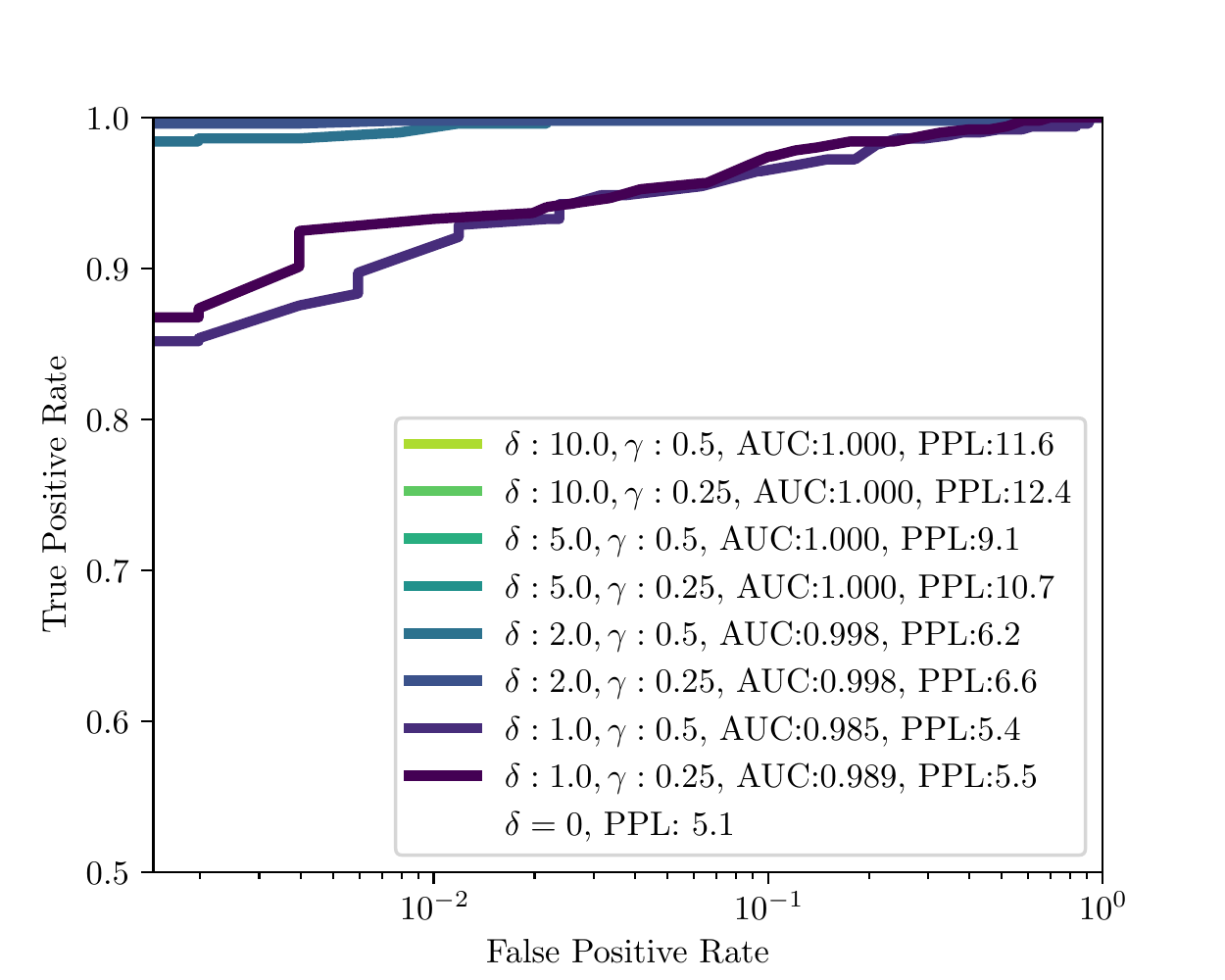}\label{fig:roc-auc-zoom}}
    \subfigure[]{\includegraphics[width=0.24\textwidth,trim={0.42cm 0.1cm 0.9cm 0.9cm},clip]{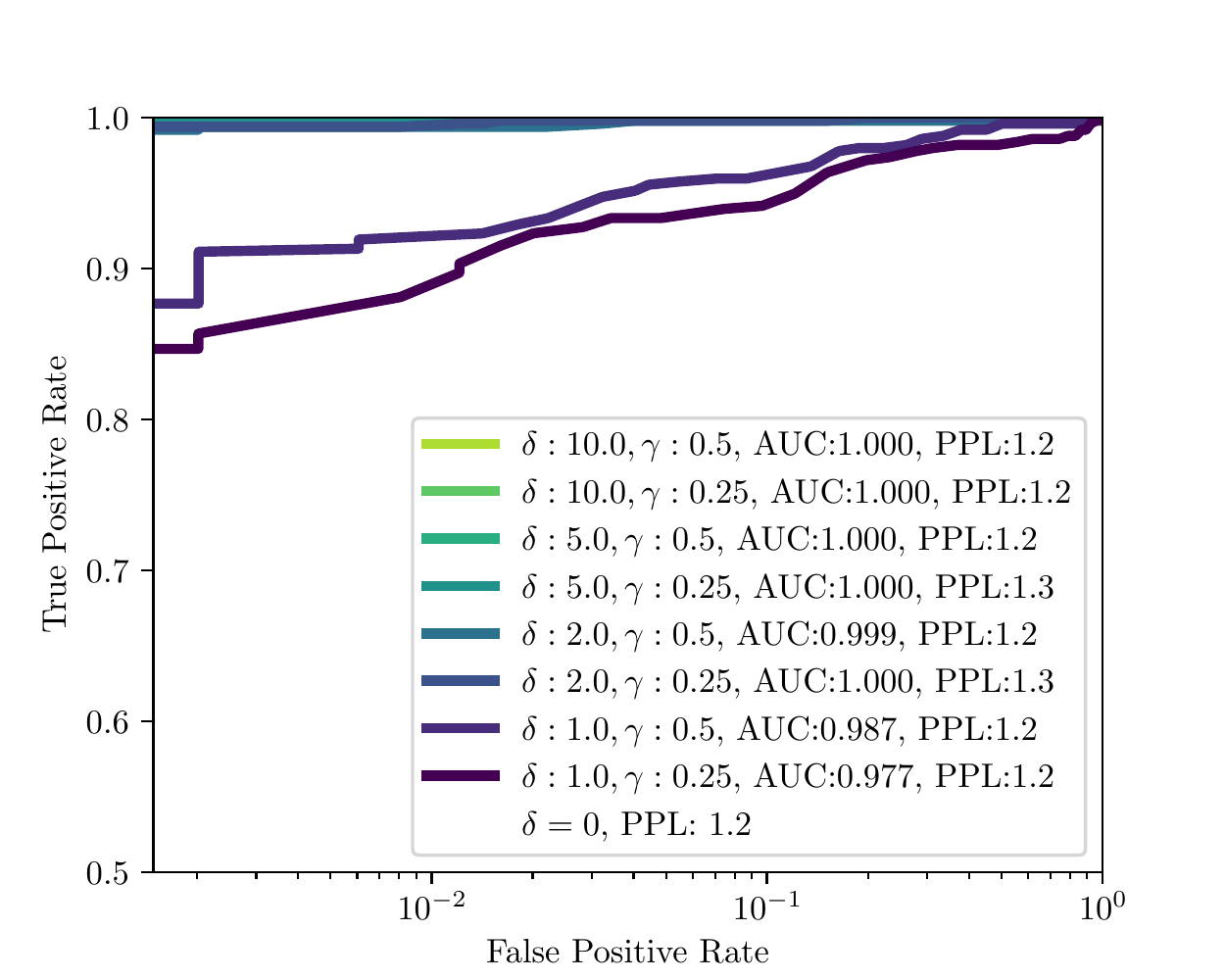}\label{fig:roc-auc-beams-zoom}}
    \vspace{-3mm}
    \caption{\looseness -1 ROC curves with AUC values for watermark detection. Several choices of watermark parameter $\delta$ are shown for \textbf{(a)} multi-nomial sampling  and \textbf{(b)} greedy decoding with 8-way beam search. \textbf{(c,d)} The same charts with semilog axes. Higher $\delta$ values achieve stronger performance, but additionally we see that for a given $\delta$, the beam search allows the watermark to capture slightly more AUC than the corresponding parameters under the multinomial sampling scheme.}
    \label{fig:roc-auc-beams-compressed}
    %\vspace{-.5cm}
\end{figure*}

%%%%% making joint tbale
\begin{table*}[h]
\small
\centering
\begin{tabular}{lrrrrrrrrrrr}
\toprule
&&&&\multicolumn{4}{c}{z=4.0} & \multicolumn{4}{c}{z=5.0}\\
\midrule
sampling & $\delta$ & $\gamma$ & count &  FPR &  TNR &  TPR &  FNR &  FPR &  TNR &  TPR &  FNR \\\midrule
    m-nom. &    1.0 &   0.50 &    506 &                  0.0 &                  1.0 &            0.767 &            0.233 &                  0.0 &                  1.0 &            0.504 &            0.496 \\
         m-nom. &    1.0 &   0.25 &    506 &                  0.0 &                  1.0 &            0.729 &            0.271 &                  0.0 &                  1.0 &            0.482 &            0.518 \\
         m-nom. &    2.0 &   0.50 &    507 &                  0.0 &                  1.0 &            0.984 &            0.016 &                  0.0 &                  1.0 &            0.978 &            0.022 \\
         m-nom. &    2.0 &   0.25 &    505 &                  0.0 &                  1.0 &            0.994 &            0.006 &                  0.0 &                  1.0 &            0.988 &            0.012 \\
         m-nom. &    5.0 &   0.50 &    504 &                  0.0 &                  1.0 &            0.996 &            0.004 &                  0.0 &                  1.0 &            0.992 &            0.008 \\
         m-nom. &    5.0 &   0.25 &    503 &                  0.0 &                  1.0 &            1.000 &            0.000 &                  0.0 &                  1.0 &            0.998 &            0.002 \\
         8-beams &    1.0 &   0.50 &    495 &                  0.0 &                  1.0 &            0.873 &            0.127 &                  0.0 &                  1.0 &            0.812 &            0.188 \\
        8-beams &    1.0 &   0.25 &    496 &                  0.0 &                  1.0 &            0.819 &            0.181 &                  0.0 &                  1.0 &            0.770 &            0.230 \\
        8-beams &    2.0 &   0.50 &    496 &                  0.0 &                  1.0 &            0.992 &            0.008 &                  0.0 &                  1.0 &            0.984 &            0.016 \\
        8-beams &    2.0 &   0.25 &    496 &                  0.0 &                  1.0 &            0.994 &            0.006 &                  0.0 &                  1.0 &            0.990 &            0.010 \\
        8-beams &    5.0 &   0.50 &    496 &                  0.0 &                  1.0 &            1.000 &            0.000 &                  0.0 &                  1.0 &            1.000 &            0.000 \\
        8-beams &    5.0 &   0.25 &    496 &                  0.0 &                  1.0 &            1.000 &            0.000 &                  0.0 &                  1.0 &            1.000 &            0.000 \\
\bottomrule
\end{tabular}
\caption{Empirical error rates for watermark detection using multinomial sampling and beam search. Each row is averaged over $\sim500$ generated sequences of length $T=200\pm5$. 
A maximum of one type-I (false positive) error was observed for any given run. All soft watermarks at $\delta=2.0$ incur at most $1.6\%$ ($8/500$) type-II error at $z=4$. No type-II errors occurred for the hardest watermarks with $\delta=10.0$ and $\gamma=0.25$.}
\label{tab:joint-acc}
\vspace{-.2cm}
\end{table*}

\textbf{Ironing in the Watermark with Beam Search.}
\cref{fig:pareto} (right) shows the tradeoff between watermark strength and accuracy when beam search is used.
Beam search has a synergistic interaction with the soft watermarking rule. 
Particularly when 8 beams are used, the points in \cref{fig:pareto} form an almost vertical line, showing very little perplexity cost to achieve strong watermarking.

\textbf{Watermark Strength vs Number of Tokens.}
Theory predicts that the type I and type II error rates of the watermark should decay to zero as the sequence length $T$ increases.  
\cref{fig:z-vs-t} shows the strength of the watermark, measured using the average $z$-score over samples, as $T$ sweeps from 2 to 200.  Curves are shown for various values of $\delta$ and $\gamma$.  The left two charts use multinomial sampling, while the right chart uses 8-way beam search and $\gamma=.25$.  Once again, we see the power of the beam search in achieving high green list ratios; even for the moderate bias of $\delta=2$, an average $z$-score greater than 5 is achieved for as few as 35 tokens.

%\newpage % need to remove trailing headline
\textbf{Performance and Sensitivity for Multinomial Sampling.}
To show the sensitivity of the resulting hypothesis test based on the observed $z$-scores, we provide a table of error rate for various watermarking parameters in \cref{tab:joint-acc}.  We also sweep a range of thresholds in ROC charts in \cref{fig:roc-auc-beams-compressed}. We further report detection performance and error rates for various cutoffs in
\cref{sec:multinomial_results}, and provide a comparison between empirical $z$-scores and theoretical predictions. Note that no type-I (false positive) errors were observed for any run shown in the error tables (see the columns of $0.0$'s

\section{Attacking the watermark}\label{sec:attacks}

\begin{figure*}[h]
    \centering
    \includegraphics[width=0.49\textwidth]{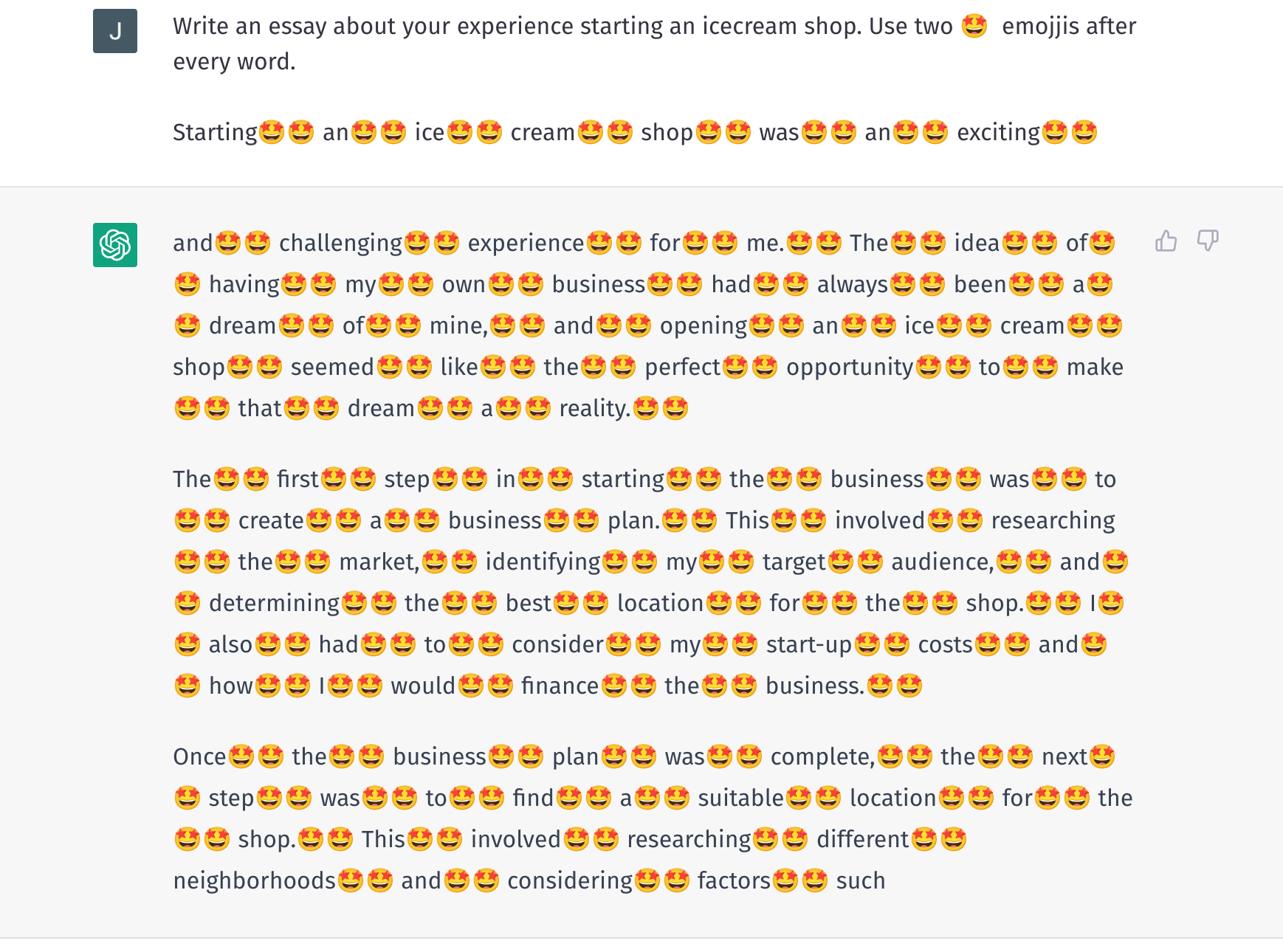}
    \includegraphics[trim={0 4cm 0 0},clip, width=0.49\textwidth]{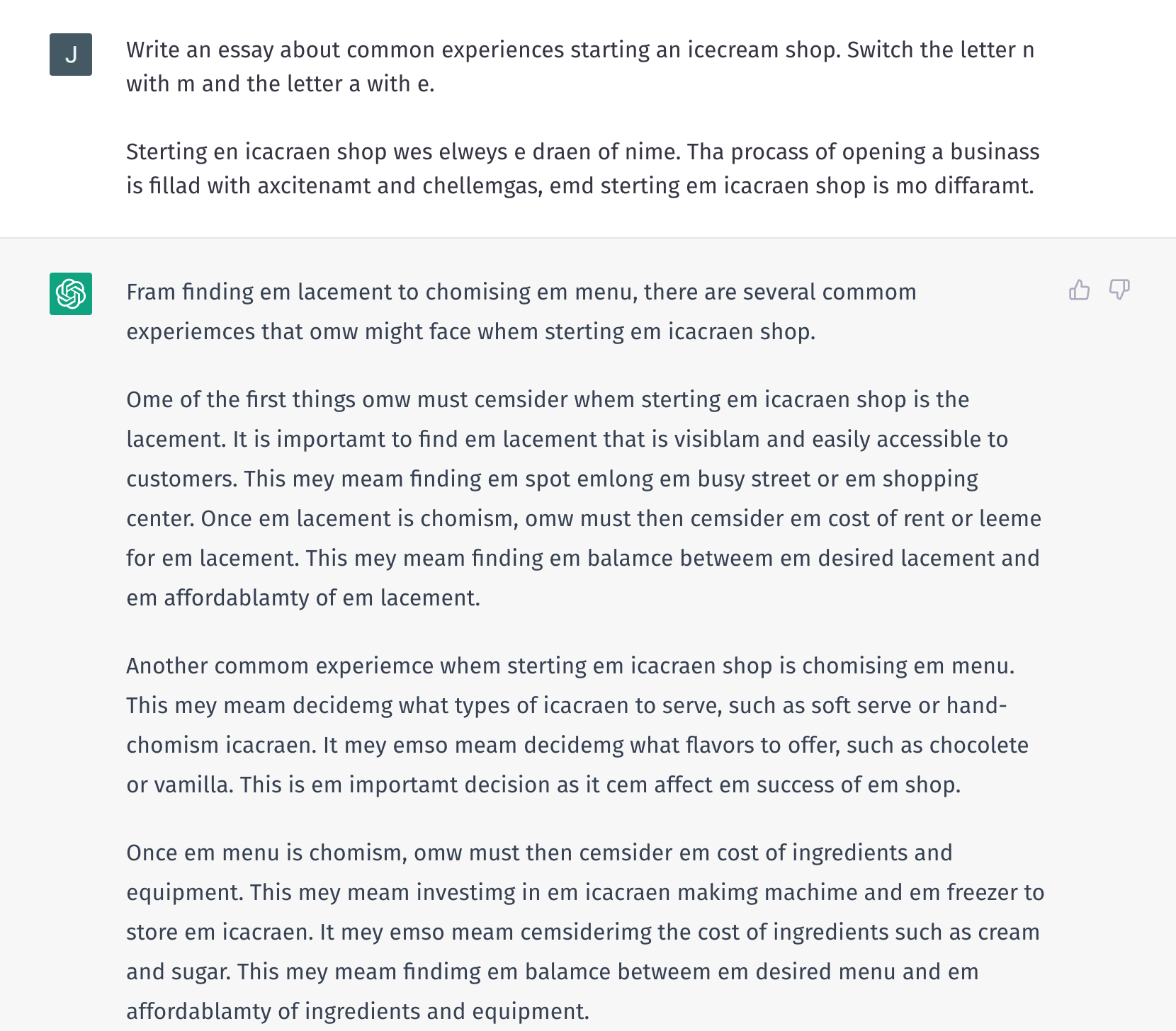}
    \caption{\textbf{Left:} \looseness -1 The ``Emoji Attack'' of \citet{goodside_there_2023} shown on the chatGPT web API on Dec15th 2022. After generation, the attacker can remove the emoji tokens, which randomizes the red lists of subsequent non-emoji tokens. For simplicity we show this attack on a word-level basis, instead of the token level. \textbf{Right:} A more complicated character substitution attack, also against chatGPT.  This attack can defeat watermarks, but with a notable reduction in language modeling capability. }
    \label{fig:emojji_attack}
    \vspace{-.25cm}
\end{figure*}

Like any software tool, care must be taken when implementing a watermark and watermark detector so that security is maintained. Otherwise, an adversarial user may modify text to add red list tokens, and thus avoid detection. In many cases, simple attacks can be avoided by properly normalizing text before hashes are computed. We now discuss a range of attacks that we are currently aware of, and methods to mitigate them.  We assume a threat model in which an attacker must create watermark-free text using a combination of a private watermarked model and other public models, but the public models are much weaker than the watermarked model.  We only consider attacks that maintain text of quality similar to the raw private model. 

Three types of attacks are possible.
\textbf{Text insertion} attacks add additional tokens after generation that may be in the red list and may alter the red list computation of downstream tokens.
\textbf{Text deletion} removes tokens from the generated text, potentially removing tokens in the green list and modifying downstream red lists. 
This attack increases the monetary costs of generation, as the attacker is ``wasting'' tokens, and may reduce text quality due to effectively decreased LM context width.
\textbf{Text substitution} swaps one token with another, potentially introducing one red list token, and possibly causing downstream red listing. This attack can be automated through dictionary or LM substitution, but may reduce the quality of the generated text. 

Below we catalog a range of attacks that fall into these categories.

\textbf{Paraphrasing Attacks}.
A baseline substitution attack is manual paraphrasing by the human attacker. This attack is technically outside the threat model we are interested in, as it requires extensive human intervention. 
Note that, especially on longer text fragments such as essays, a few sentences that are partially or not at all paraphrased can be sufficient to trigger watermark detection at a statistically significant threshold.

A more scalable version of this attack is to use automated paraphrasing. An attacker that has access to a public language model can use this model to rephrase the output of the generated model. We provide an experimental evaluation of this attack in \cref{sec:replacement-attack}. \emph{Here, it is crucial to note the trade-off that an attacker is making: The attacker is using a weaker paraphrasing model to modify the text, reducing both watermark strength and text fluency.} If the attacker had an equally strong language model at hand, there would be no need to use the watermarked API, the attacker could generate their own text.

\textbf{Discreet Alterations.}
An attacker could make small alterations, adding additional whitespaces, or misspelling a few words to impact the computation of the hash.
A well-constructed watermark should normalize text to ignore explicit whitespaces when computing the hash. Changing the spelling of many words is likely to severely degrade the quality of text. When implemented carefully, surface level alterations should not pose a serious threat to a watermark. 

\textbf{Tokenization Attacks.}. 
An attacker can modify text so that sub-word tokenization of a subsequent word changes. For example (again with BPE), if the text fragment \texttt{ life.\textbackslash nVerrilius} is modified to \texttt{ life. Verrilius} (i.e. \texttt{\textbackslash n} is replaced), then the tokenization of the succeeding word also switches from \texttt{V\_err\_ili\_us} to \texttt{Ver\_r\_ili\_us}.  This results in more red list tokens than one would expect from a single insertion. The attack can contribute to the effectiveness of a more powerful attack, but most tokens in a default sentence will not be vulnerable. 

\textbf{Homoglyph and Zero-Width Attacks.}
\looseness -1  This is a special case of the discreet alteration attack.  The effect of tokenization attacks can be multiplied through homoglyph attacks \citep{gabrilovich_homograph_2002}. Homoglyphs attacks are based on the fact that unicode characters are not unique, with multiple unicode IDs resolving to the same (or a very similar-looking) letter. This breaks tokenization, for example the word \texttt{Lighthouse} (two tokens) expands to 9 different tokens if \texttt{i} and \texttt{s} are replaced with their equivalent Cyrillic unicode characters.
Security against Homoglyph and tokenization attacks can be maintained using input normalization before the text is tested for watermarks, for example via canonicalization as in \citet{helfrich_dual_2012}. 
Otherwise, simple replacements of characters with their homoglyphs could break enough tokens to remove the watermark.

Likewise, there are zero-width joiner/non-joiner unicode characters that encode zero-width whitespace and hence are effectively invisible in most languages. Like homoglyphs, these characters must be removed through canonicalization \citep{pajola_fall_2021,boucher_bad_2022}.

\textbf{Generative Attacks.}
Generative attacks abuse the capability of large language models for in-context learning, and prompt the model to change its output in a predictable and easily reversible way. For example, the Emoji attack of \citet{goodside_there_2023} proceeds by prompting the model to generate an emoji after every token, see \cref{fig:emojji_attack}, left. These emojis can be removed, randomizing the red list for subsequent tokens. More broadly, all attacks that prompt the model to change its output ``language'' in a predictable way can potentially cause this, for example prompting the model to replace all letters \texttt{a} with \texttt{e}, see \cref{fig:emojji_attack}, right. Or, as a reverse homoglyph attack, prompting the model to ``switch the letter i with i", where the second i is a Cyrillic letter.

These attacks are the strongest tools against watermarking to our knowledge, but also require a strong LM with the capacity to follow the prompted rule without a loss in output quality. Additionally, this increases the cost of text generation by requiring more tokens than usual to be generated and reducing effective context width. 

\looseness -1 A defense against these attacks is to include negative examples of such prompts during finetuning, training the model to reject these requests.  Note that instruction finetuning is already common (for example in ChatGPT) for other categories of malicious prompts, using reinforcement learning protocols (RLHF) \citep{christiano_deep_2017,ouyang_training_2022,bai_constitutional_2022}. 

\begin{figure}[t]
    \centering
    %\subfigure[]{
    \includegraphics[width=0.49\textwidth]{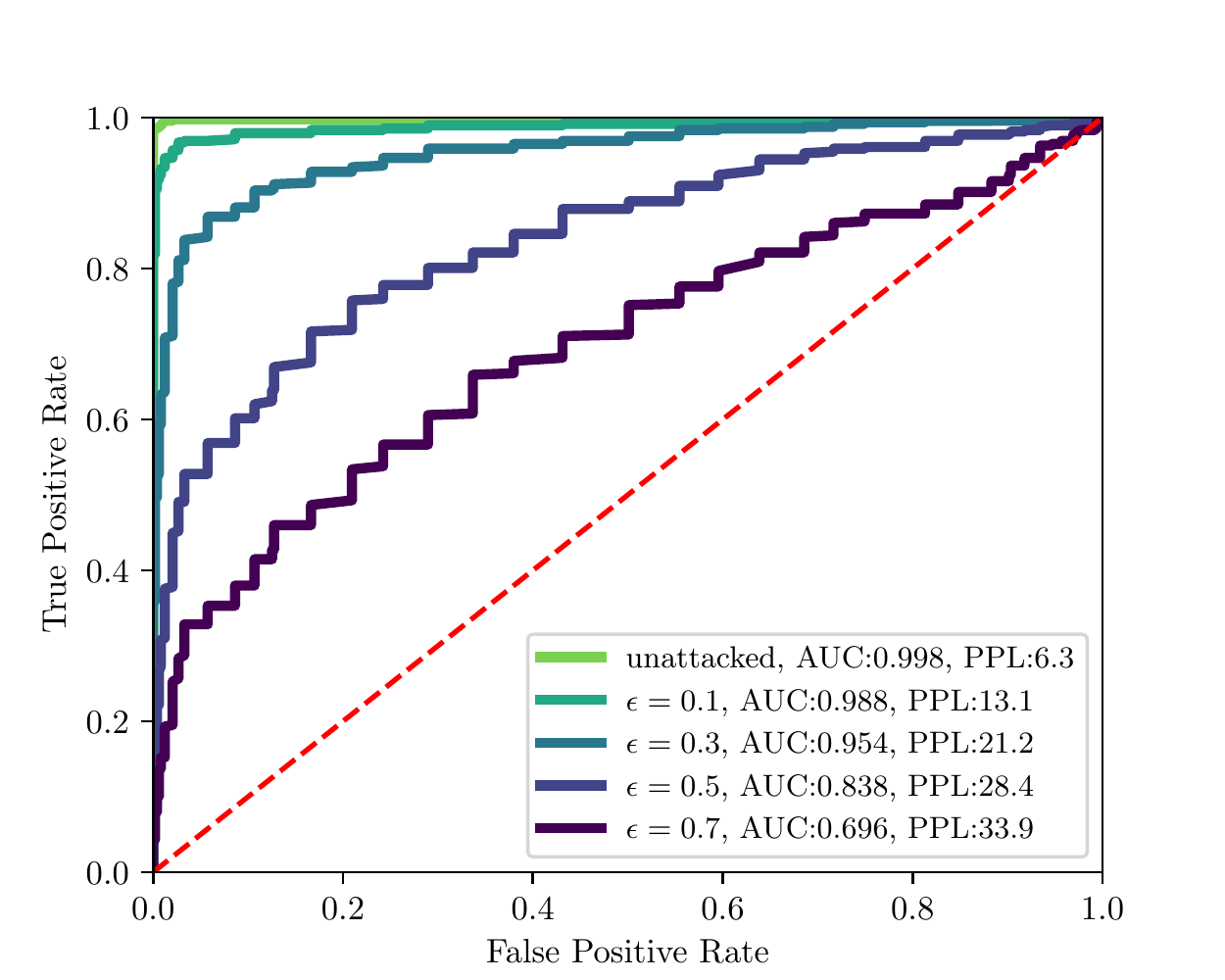}\label{fig:roc-auc-attack-sampling}
    %}
    \vspace{-.2cm}
    \caption{ROC curves for watermark detection under attack via the T5 Attack detailed in Section~\ref{sec:replacement-attack}, with various replacement budgets $\varepsilon$. The initial, unattacked watermark is a $\gamma=0.5$, $\delta=2.0$ soft watermark generated using multinomial sampling. The attack achieves a high level of watermark degradation, but \textit{only} at $\varepsilon=0.3$, which \textit{costs} the attacker an average of $\sim15$ points of perplexity compared the PPL of the original watermarked text.}
    \label{fig:roc-auc-attack}
    \vspace{-.5cm}
\end{figure}

\subsection{Degradation Under Attack: Span Replacement Using a LM}\label{sec:replacement-attack}

We study a realistic black-box attack by attempting to remove the presence of the watermark by replacing spans in the original output text using another language model. We treat the watermark algorithm as if it is private, mocking seclusion behind an API.  The attacker does not have access to the locations of green list tokens and instead tries to modify the text through token replacement at random indices until a certain word replacement \emph{budget}, $\varepsilon$, is reached. The budget constraint maintains a level semantic similarity between the original watermarked text and the attacked text, otherwise the ``utility'' of the original text for its intended task may be lost. Also, each span replacement in the attack is performed via inference using a multi-million parameter language model. While this is roughly a third the size of the target model, it means that the attack incurs an associated cost per step implying that a base level of efficiency with respect to model calls would be desired in practice.

In our experiment, we adopt T5-Large \citep{raffel_exploring_2020} as the replacement model and iteratively select and replace tokens until the attacker either reaches the budget, or no more suitable replacement candidates are returned. 

\textbf{Details of the T5 Span Attack.}
We tokenize the watermarked text using the T5 tokenizer. Then, while fewer than $\varepsilon T$ successful replacements have been performed or a maximal iteration count is reached:
\begin{enumerate}[topsep=0pt,itemsep=-0ex,partopsep=1ex,parsep=1ex]
\item Randomly replace one word from the tokenization with a \texttt{<mask>}.
\item Pass the region of text surrounding the \texttt{mask} token to T5 to obtain a list of $k=20$ candidate replacement token sequences via a $50$-way beam search, with associated scores corresponding to their likelihood.
\item Each candidate is decoded into a string. If one of the $k$ candidates returned by the model is \emph{not} equal to the original string corresponding to the masked span, then the attack succeeds, and the span is replaced with the new text.
\end{enumerate}

After attacking a set of $500$ sequences of length $T=200\pm5$ token sequences this way, we compute updated $z$-scores and tabulate error rates (\cref{tab:attacked-acc} in the Appendix). We also generate ROC plots for a range of $\varepsilon$ budgets. While this attack is effective at increasing the number of red list tokens in the text, as shown in \cref{fig:roc-auc-attack}, we only measure a decrease in watermark strength of $0.01$ AUC when $\varepsilon=0.1$. While the watermark removal is more successful at a larger budget of $0.3$, the average PPL of attacked sequences increases by $3\times$ in addition to requiring more model calls.

%\vspace{-.2cm}
\section{Related Work} 

\looseness -1 The idea of watermarking, defined as unseen modifications to data that hide identifying information, has a long history. 
However, watermarking of digital text has been considered challenging in the past, due to its discrete nature \citep{katzenbeisser_information_2000}. Watermarking is considered easier for continuous-valued data, where watermarks can be encoded with a variety of well-studied strategies  \citep{petitcolas_information_1999,zhu_hidden_2018,lu_large-capacity_2021,boenisch_systematic_2021}.

In the following, we note that \textit{watermarking}, as a method that encodes enough information to identify the source of a text fragment, is strictly a subset of \textit{steganography}, the task of embedding arbitrary hidden information into data.

\textbf{Watermarking Natural Language.}
\looseness -1 Early approaches to watermarking natural text in digital form in \citet{atallah_natural_2001,atallah_natural_2003} pose a similar problem with similar desiderata as in our setting, except targeted towards classical models. Given a string of text $s$, \citet{atallah_natural_2001} propose to generate text $s'$ with the properties that $s'$ has similar meaning, contains a watermark with an extremely small false-positive rate that is not readable by a party without knowledge of the secret key that generated the watermark, is hard to remove through editing of $s'$ and is further detectable without knowledge of $s$ (or the scheme generating $s$). The actual steganography scheme described therein is limited by its rule-based understanding of natural text to modifications of parsed syntactic tree structures. Finally, the watermark can be read by reconstructing the tree structure, with the chance of a false-positive for a watermark of $w$ bits vanishing quickly at $2^{-w}$.

Rule-based watermarks were further developed in a series of works \citep{chiang_natural_2004,topkara_natural_2006,topkara_hiding_2006,meral_natural_2009,venugopal_watermarking_2011} with variants also embedding watermarks based on synonym tables instead of only parse trees. Early developments were summarized in \citet{jalil_review_2009}, but strong watermarks significantly degraded the text quality due to the limited flexibility of language models at the time.

While approaches via hierarchical language models in \citet{wilson_linguistic_2014} still required human interactions, the emergence of modern neural language models \citep{vaswani_attention_2017,devlin_bert_2019} also allowed for improved watermarking/steganography \citep{fang_generating_2017,ziegler_neural_2019,dai_towards_2019,he_protecting_2022-1,he_cater_2022}. \citet{fang_generating_2017} propose such a neural steganography approach where, to encode a message of $w$ bits, the message is first separated into blocks of length $b$. Then,  the vocabulary $V$ of a language model is partitioned at random into disjoint sets of size $|V|/ 2^b$. A generative LM can then encode the message by generating only a token from the ``allowed'' set at each position. However, this hard rule reduces the quality of generated text, boxing the LM into only a small selection of valid tokens at every step. Other approaches, such as \citet{ueoka_frustratingly_2021} use mask-infilling models such as BERT to edit already-generated text for the purpose of steganography. Finally, \citet{abdelnabi_adversarial_2021} design an end-to-end system where both encoding and decoding are handled by text-to-text language models that are trained adversarially.% for this purpose. 

With similar motivation to our proposal, \citet{kaptchuk2021meteor} constructs a framework that adapts traditional public-key cryptographic steganography specifically for ``natural'' comunication channels like text using generative models. However, their method, Meteor, relies on a synchronized model framework where the sender and receiver agree on a shared generative model used to embed and decode the hidden bits of information being sent.

Recently, \citet{aaronson_my_2022} announced that he is studying cryptographic approaches to watermarking in collaboration with OpenAI. Their preliminary method is based only on biasing of the LM output, as opposed to complete determination as in \citet{fang_generating_2017}.  While details are not currently available, the description suggests that hashing of $n$-gram sequences is involved. We hope to extend our comparison to this work when more information becomes available.

Note that a separate line of work investigates watermarking \textit{model parameters themselves}. This would not be used to watermark model output (as in this work), but to defend against model stealing \citep{adi_turning_2018,boenisch_systematic_2021}. Approaches, such as \citet{gu_watermarking_2022}, implant backdoor triggers through a finetuning process to cause biased responses to specific inputs, a behavior detectable at verification time.

In contrast to other currently published works, we want to focus on strategies that are simultaneously minimally restrictive to a language model, leverage the LMs own understanding of natural text, require no usage of the LM to decode the watermark, and can be theoretically analyzed and validated.

\textbf{Post-hoc Detection.}
\looseness -1 An alternative to watermarking is to develop detection models that perform a post-hoc analysis of machine-generated text, for example using language model features or finetuning existing large language models to behave as detectors \citep{zellers_defending_2019,tan_detecting_2020}, see an overview in \citet{jawahar_automatic_2020}. These detectors work because LMs still leave detectable signals in generated text. Implementation details, such as sampling strategies, can be reverse-engineered from text \citep{tay_reverse_2020}. However, detection approaches are slowly losing ground as LM capabilities increase, for example \citet{gambini_pushing_2022} note that a range of detection strategies for GPT-2 already struggle with GPT-3. Further, known detectors are also vulnerable to adversarial attacks that degrade their functionality \citep{wolff_attacking_2022}.

While efforts to provide strong detectors continue, as in \citet{tian_gptzero_2023}, ultimately language model progress may make detection infeasible. All post-hoc detection methods require the LM to be significantly biased away from human text in some measurable way, such as low variation in perplexity across sentences \citep{tian_gptzero_2023}. Even for current LLMs, this margin might be small. This is already problematic, as detection schemes that operate within this small margin are susceptible to labeling human text as false-positive, a concern that is especially pressing for people who produce unusual text, such as a non-native speakers, and people who use computer tools to assist them in writing. 
Such populations might be especially at risk for false-positives, which could lead to academic problems if these detectors are used in schools \citep{butoi_things_2023}. 

The watermarking scheme we propose is designed so that false positives are statistically improbable, regardless of the writing patterns of any given human.

\vspace{-.2cm}
\section{Conclusion}

The presented watermark has a number of nice properties that make it a practical choice: 
the watermark is computationally simple to verify without access to the underlying model, false positive detections are statistically improbable, and the watermark degrades gracefully under attack. Further, the proposed scheme can be retro-fitted to any existing model that generates text via sampling from a next token distribution, without retraining. Note, however, that careful implementation and instruction tuning against generative attacks may be required for very large models. 

There is one more important property of the proposed method that we have not discussed: The $z$-statistic  used to detect the watermark depends only on the green list size parameter $\gamma$ and the hash function for generating green lists. There is no dependence on $\delta$ or any other factor related to how the green list is enforced. 
For this reason, one can deploy the watermark using context-specific $\delta$ choices or green list enforcement rules for different kinds of text (e.g., prose vs code, or small vs large models) while using the same downstream watermark detector.  One can also change a proprietary implementation of the watermarked sampling algorithm without any need to change the detector.
Finally, the watermarking method could be turned on only in certain contexts, for example when a specific user seems to exhibit suspicious behavior.

There are still a number of remaining open questions regarding watermarking.  For example, what kind of robust hashing rules are possible, and when are these rules provably optimal?  What is the best way to test for the watermark in a streaming context, or in a context where a short span of watermarked text lives inside a longer non-watermarked span?  Are there simple sensitivity bounds that are more accurate than those presented above for large $\delta$ and small $\gamma$?
We hope our present results are enough to convince readers that watermarks could be a practical tool for combating malicious uses of generative models, and we leave these additional questions for future research

\vspace{-.2cm}
\section{Acknowledgements}
This work was made possible by the ONR MURI program, DARPA GARD (HR00112020007), the Office of Naval Research (N000142112557), and the AFOSR MURI program.  Commercial support was provided by Capital One Bank, the Amazon Research Award program, and Open Philanthropy. Further support was provided by the National Science Foundation (IIS-2212182), and by the NSF TRAILS Institute (2229885).

\label{lastpagemaintext}
%\newpage
\icmltitlerunning{A Watermark for Large Language Models.}
\vspace{-.3cm}
\bibliography{NLP_auto_references,manual_references}
\bibliographystyle{icml2023}

%%%%%%%%%%%%%%%%%%%%%%%%%%%%%%%%%%%%%%%%%%%%%%%%%%%%%%%%%%%%%%%%%%%%%%%%%%%%%%%
%%%%%%%%%%%%%%%%%%%%%%%%%%%%%%%%%%%%%%%%%%%%%%%%%%%%%%%%%%%%%%%%%%%%%%%%%%%%%%%
% APPENDIX
%%%%%%%%%%%%%%%%%%%%%%%%%%%%%%%%%%%%%%%%%%%%%%%%%%%%%%%%%%%%%%%%%%%%%%%%%%%%%%%
%%%%%%%%%%%%%%%%%%%%%%%%%%%%%%%%%%%%%%%%%%%%%%%%%%%%%%%%%%%%%%%%%%%%%%%%%%%%%%%
\newpage
\appendix
\onecolumn
\renewcommand{\thepage}{\arabic{page}}
\setcounter{page}{1}
\icmltitlerunning{A Watermark for Large Language Models. Appendix Page \thepage\ of \pageref{lastpageapp}.}

\begin{table*}[ht]
\tiny
\begin{tabular}{p{0.2cm}|p{2.75cm}|p{2.75cm}|p{2.75cm}|p{2.75cm}|p{0.2cm}|p{0.42cm}|p{0.42cm}|p{0.3cm}|p{0.3cm}}
\toprule
 idx &  prompt & real completion &  no watermark (NW) &  watermarked (W) &  $S$ &  (NW) $z$ &  (W) $z$ &  (NW) PPL &  (W) PPL \\
\midrule
1011 &  [...]nned any special treatment as an officer. The living quarters reserved for him -- palatial by camp standards -- remained empty as he bunked nearer to the workers. Straight-talking and cigar-chomping ( &   a habit that earned him the nickname "Big Smoke"), he and politics didn't mix well. Signing on in 1905 "until he could predict success or failure according to his own judgment," Stevens resigned in 19[...truncated] &  he was nicknamed "The Devil"), Stevens was a man who could get anything done. He had one major flaw: He was an expert at getting things done -- without getting anything done. The problem was, he got t[...truncated] &  another Stevens habit), Stevens impressed them so much that he even earned the nickname "Bully Stevens" from a group of workers.\textbackslash nAs the tedious, years-long construction dragged on -- years and years a[...truncated] &               0.87 &           0.14 &          8.20 &       8.77 &     10.33 \\\midrule
 924 & [...] Helen Cavallo.... She's so funny and so warm, and she's the kind of person the kids won't look away from, and the moms won't look away from."\textbackslash nTelevision personality and author of "Courtney's Creative &    Adventurers," Courtney Watkins, hosts "Thinking Up" -- which provides imaginative children's activities, drawing from her eclectic background as a teacher of kindergarten and art and a designer of je[...truncated] &   Child," Courtney Love, is one of the series' executive producers. Gilmore said the Disney team has a "talent for building relationships" with the target audience, who, she said, are "very passionate [...truncated] &   World: The Parental Roadmap for Success," Gilmore also noted that there's an opportunity to harness the power of social media to share stories and provide expert advice.\textbackslash n"We've seen that, too," she s[...truncated] &               0.87 &          -0.28 &          9.62 &       6.95 &     10.30 \\\midrule
1151 & [...]Carrie and a formidable Heather Orth as her frighteningly religious mother.\textbackslash nIt's not Oeschger's fault that there's a big disconnect between the painfully shy outsider she plays so convincingly and the &    forceful notes she sings with dynamic conviction. Cohen's book works best for those who know the story well - no small number - jumping from one plot point to the next without connecting the dots. Ca[...truncated] &  confident, vampish Carrie whose songwriting is in a state of constant revising. It's a shame, because Oeschger is an able actress. And a terrific singer.\textbackslash nThe original production, which had just two a[...truncated] &   awful person inside that she fights so hard to control with forced bravado.\textbackslash nBut the script's missteps and musical shortcomings, especially in the tightly-crafted scenes, leave the main characters far[...truncated] &               0.87 &          -0.99 &          8.77 &       8.40 &     12.33 \\\midrule
 853 & [...]? Transferring misbehaving government servants will not solve any problem. They must be suspended and demoted. This is to ensure they will think twice before doing anything.\textbackslash nAnonymous \#18452573: Bigot &  ry is the by-product of racism. This is just another form of ethnocentrism and xenophobia due to BN's 50 years of racially divided political parties and racist policies.\textbackslash nIslam, Christianity and other [...truncated] & ry in schools is rampant and this is not acceptable. Muslims should not be allowed to enter the schools.\textbackslash nTun: I am saddened and disappointed at how our education system is being used to promote discri[...truncated] & ry against Orang Asli children is a new low. They're not children to be bullied or manipulated by the teachers.\textbackslash nHow come the teachers are the only people doing such acts? Why not the parents?\textbackslash nI read o[...truncated] &               0.87 &           0.99 &         10.47 &       7.42 &     10.55 \\\midrule
 341 & [...]When more responsible people carry, violent crime goes down. The FBI crime statistics don’t lie. Why do the left and gun grabbing crowd ignore this fact? Liberalism is a mental disorder.\textbackslash nLets look one &  poll that this particular article is using for it’s numbers.\textbackslash nThe poll used 2 Senate Districts in the State of VA. District 21 and District 38.\textbackslash nwith the total numbers of respondents being 305 voters i[...truncated] &  more time at the facts.\textbackslash nThe “Fix Gun Checks Act” would make it tougher for the mentally ill to legally buy guns by strengthening background checks so people like James Holmes (the Aurora shooter) can[...truncated] &    day at the number of school shootings that have occurred since Columbine. They were not just random shootings. They were planned. In the preplanned shootings, the students/victims were well trained a[...truncated] &               0.88 &          -0.42 &          9.22 &       1.12 &     13.66 \\
\bottomrule
\end{tabular}
\caption{High spike entropy examples under a $\delta=2.0,\gamma=0.5$ watermark with multinomial sampling.}\label{tab:high-entropy-examples}
\end{table*}

\begin{table*}[t]
\tiny
\begin{tabular}{p{0.2cm}|p{2.75cm}|p{2.75cm}|p{2.75cm}|p{2.75cm}|p{0.2cm}|p{0.42cm}|p{0.42cm}|p{0.3cm}|p{0.3cm}}
\toprule
 idx &  prompt & real completion &  no watermark (NW) &  watermarked (W) &  $S$ &  (NW) $z$ &  (W) $z$ &  (NW) PPL &  (W) PPL \\
\midrule
 132 & [...]cond season at Hall Bros Oval.\textbackslash nThe defender also admitted his surprise at Young’s run to the finals but credited the injection of youth into the side.\textbackslash n“We were really in a building phase last year and &  we copped a few floggings with all those juniors blokes coming in,” Galvin said.\textbackslash n“Now, we’ve kept that core group together for two years and I think we’ve come along quicker than we anticipated.\textbackslash nROCK[...truncated] &  we copped a few floggings with all those juniors blokes coming in,” Galvin said.\textbackslash n“Now, we’ve kept that core group together for two years and I think we’ve come along quicker than we anticipated.\textbackslash n“Tha[...truncated] &                            we copped a few floggings with all those juniors blokes coming in,” Galvin said.\textbackslash n“Now, we’ve kept that core group together for two years and I think we’ve come along quicker than we anticipated.\textbackslash n“Tha[...truncated] &               0.58 &          -1.27 &         -1.13 &       1.05 &      1.04 \\\midrule
 850 &  [...]son tear through the banner Friday, Nov. 16, in Copper Mountain's East Village.\textbackslash nBetty the Yeti rallies the first chairlift riders of the 2018-2019 ski season Friday, Nov. 16, in Copper Mountain's East &  Village.\textbackslash nCopper Ski Resort Chief Operations Officer Jesse True high fives first chairlift riders of the 2018-2019 ski season Friday, Nov. 16, in Copper Mountain's East Village.\textbackslash nCopper Mountain reside[...truncated] &  Village.\textbackslash nCopper Mountain Resort will open Friday with 275 acres of skiable terrain based out of East and West villages.\textbackslash nThe ski area will open on a limited schedule on Friday, Nov. 16, followed by a [...truncated] &                           Village.\textbackslash nA rider heads down a slope on the opening day of 2018-2019 season at Copper Mountain.\textbackslash nA rider heads down a slope on the opening day of 2018-2019 season at Copper Mountain.\textbackslash nA rider heads down[...truncated] &               0.61 &           0.00 &          5.37 &       4.58 &      1.18 \\\midrule
 939 &  [...]ngs.\textbackslash nAnd now we have Trump calling on Bee to be fired. You know who else hates comedy at their expense? Authoritarian leaders. Egypt’s President Abdel Fattah el-Sisi, a man Trump loves, banned Egypt’s &   version of The Daily Show because of the way its comedy mocked the government. The show’s star, Bassem Youssef, had to flee the country and is now living in exile with his family.\textbackslash nTurkey’s strongman [...truncated] &   top comedy show after the host criticized the military’s coup over a year ago. And then President Vladimir Putin banned comedy on Russian television over the same topic.\textbackslash nIt’s not hard to see why Trum[...truncated] &                             leading comedy show and radio host, Ahmed Mansour, from entering the country.\textbackslash nTrump’s attacks on freedom of expression go far beyond just news media. Trump also wants to silence those in the entertai[...truncated] &               0.61 &           0.99 &          2.83 &       4.86 &      1.27 \\\midrule
 444 &  [...]1 on the season with a 2.76 GAA and.906 save percentage.\textbackslash nMar. 11 5:28 PM PT6:28 PM MT7:28 PM CT8:28 PM ET0:28 GMT8:28 5:28 PM MST6:28 PM CST7:28 PM EST4:28 UAE (+1)20:28 ET21:28 BRT - Sergei Bobrovsky &   stopped 18 of 19 shots in the Blue Jackets' 2-0 loss to the Islanders on Monday. The defeat dropped him to 29-22-1 on the season with a 2.74 GAA and.907 save percentage.\textbackslash nMar. 9 5:41 PM PT6:41 PM MT7:[...truncated] &   stopped 28 of 30 shots in the Blue Jackets' 3-2 shootout loss to the Islanders on Sunday. The loss dropped him to 30-22-1 on the season with a 2.76 GAA and.906 save percentage.\textbackslash nMar. 10 6:15 PM PT7:15[...truncated] &                             stopped 30 of 34 shots in the Blue Jackets' 6-2 win over the Canadiens on Monday. The victory pushed him to 31-21-1 on the season with a 2.72 GAA and.908 save percentage.\textbackslash nMar. 10 11:49 AM PT12:49 PM [...truncated] &               0.62 &          -0.99 &          2.40 &       1.33 &      1.45 \\\midrule
1171 &  [...]South Elliott Place, near Lafayette Street, Fort Greene; \$; no smoking; Mastercard and Visa.\textbackslash n(718) 857-8863; 620 Vanderbilt Avenue, Prospect Heights; \$\$; no smoking; American Express, Mastercard, Visa & .\textbackslash n(718) 624-9267; 218 Court Street, near Warren Street, Cobble Hill; \$; no smoking; American Express, Mastercard and Visa.\textbackslash n(718) 499-5557; 426A Seventh Avenue, near 15th Street, Park Slope; \$; no smok[...truncated] & .\textbackslash n(718) 857-8863; 620 Vanderbilt Avenue, Prospect Heights; \$\$; no smoking; American Express, Mastercard, Visa.\textbackslash n(718) 857-8863; 620 Vanderbilt Avenue, Prospect Heights; \$\$; no smoking; American Express[...truncated] & .\textbackslash n\textbackslash nBusiness in the Park\textbackslash n\textbackslash nHaley’s, 77\textbackslash n\textbackslash nHaley’s Restaurant, 77\textbackslash n\textbackslash nHaley’s, 77\textbackslash n\textbackslash nHaley’s, 77\textbackslash n\textbackslash nHaley’s, 77\textbackslash n\textbackslash nHaley’s, 77\textbackslash n\textbackslash nHaley’s, 77\textbackslash n\textbackslash nHaley’s, 77\textbackslash n\textbackslash nHaley’s, 77\textbackslash n\textbackslash nHaley’s, 77\textbackslash n\textbackslash nHaley’s, 77\textbackslash n\textbackslash nHaley’s, 77\textbackslash n\textbackslash nHaley’s,[...truncated] &               0.62 &          -0.71 &         11.17 &       1.09 &      1.48 \\
\bottomrule
\end{tabular}
\caption{Low spike entropy examples under a $\delta=2.0,\gamma=0.5$ watermark with multinomial sampling.}\label{tab:low-entropy-examples}
\end{table*}

\begin{table*}[t]
\tiny
\begin{tabular}{p{0.2cm}|p{2.75cm}|p{2.75cm}|p{2.75cm}|p{2.75cm}|p{0.2cm}|p{0.42cm}|p{0.42cm}|p{0.3cm}|p{0.3cm}}
\toprule
 idx &  prompt & real completion &  no watermark (NW) &  watermarked (W) &  $S$ &  (NW) $z$ &  (W) $z$ &  (NW) PPL &  (W) PPL \\
\midrule
1105 & [...]ent to mark 80 important moments in our shared history. Looking back it's easy to see these moments changed the way Australians lived and thought.\textbackslash nThe Opera House Project is an online documentary that &  tells the story behind one of the greatest buildings of the twentieth century and explores the cultural heritage that has ensued for more than forty years to the present day.\textbackslash nA selection of archival [...truncated] &    uses archive footage to explore the history of the Opera House. It's the first and only full-length documentary on the Opera House, and the production is helmed by 90s documentary maker and broadcast[...truncated] &     explores Australia's National Art Gallery. It tells the history of the construction, and evolution of a work that has been called one of the most significant cultural legacies of Australian art.\textbackslash nSydn[...truncated] &               0.78 &           0.85 &         11.46 &       3.40 &      5.22 \\\midrule
 354 & [...]epeatedly to first see the group list, then the correct usage of the commands.\textbackslash nwill turn the Tool Tips on. Using the value 0 in front of the above command will turn the Tool Tips off. This is shown as &   1/0 below (called a Boolean value - [bool]), which indicates that using 1 enables a function, 0 disables it. The use of \# (or [float]) after a command below means a numeric value is required for the [...truncated] &  the command "tooltips 0" in the Console.\textbackslash n\textbackslash nThere are a range of useful commands which can be accessed through the game console, or entered into the.con files. To open the console, press the tilde (\textasciitilde ) [...truncated] &  the tooltip to the right of this group list.\textbackslash n\textbackslash nA list of all commands can also be displayed by pressing the \textasciitilde  key repeatedly, typing the commands and then pressing the TAB key repeatedly, again.\textbackslash n\textbackslash nA li[...truncated] &               0.71 &          -0.42 &         11.46 &       1.59 &      3.00 \\\midrule
 808 & [...]called it Kevin Hart Presents Chris Rock’s “Top Five With Kevin Hart.” Hillary should announce Kevin as her running mate and be done with it.\textbackslash nAfter he makes you laugh for an hour, you feel like you’re &   his friend—you feel his joy and his pain. If he has asthma, you have asthma. That’s because Kevin’s vulnerable. When he’s talking about his kids, his kids know more than him. He’s not afraid to be “f[...truncated] &    in bed with him, and you’re ready to go to sleep. I met Kevin the night before the show, and he came up to my dressing room and hugged me and said, “This is the best show of the year.” He was just li[...truncated] &     the luckiest man in the world. And then just like that, Kevin Hart makes you feel like a complete moron.\textbackslash nI just finished playing an iPhone game called You’re the Best Dad Ever, and I was trying to gi[...truncated] &               0.81 &           3.68 &         11.74 &       3.75 &      5.17 \\\midrule
 438 & [...] logged into their Google account and have verified profiles that match queries for the site.\textbackslash nGoogle's John Mueller said there is no ranking benefit in using different Google Search Console and Google &   Analytics accounts for each individual web site you manage. The topic came up before, as long as you are not spamming Google - there also is no down side to using the same accounts across multiple we[...truncated] &   Analytics data to calculate ranking sentiment.\textbackslash nGoogle have a new feature called Keyword Difficulty Analysis that shows you the keywords your competitors are ranking for. It shows the demand curve as [...truncated] &   + accounts to see different numbers.\textbackslash nGoogle also released their own great blog post on the news algorithm. They give lots of great advice to help your site do better.\textbackslash nFinally, at the end of September [...truncated] &               0.77 &           1.41 &         11.88 &       9.24 &      4.10 \\\midrule
 315 & [...]tled out of court and publicly reconciled.\textbackslash nIn the ’80s the band’s popularity waned in the United States but remained strong abroad. Robin released three solo albums, with limited success. The Bee Gees &  returned with some moderate hits in the late 1990s and were inducted into the Rock and Roll Hall of Fame in 1997. With his brothers, Mr. Gibb won six Grammys.\textbackslash nIn addition to his wife and his brother [...truncated] &   continued to tour, and Barry became a television producer.\textbackslash nBut in the early ’90s, the Bee Gees’ popularity remained high. They scored a hit with “Don’t Stop Believing” in 1990, and in 1992 the Bee Ge[...truncated] &    ’ 1990 album, “Spirits of the Century,” was a mixed critical and commercial success.\textbackslash nWhen the brothers were nominated for a Grammy Award in 1990, Mr. Gibb’s “You Should Be Dancing” and “Massachusetts,[...truncated] &               0.68 &           2.97 &         12.73 &       3.15 &      1.93 \\
\bottomrule
\end{tabular}
\caption{High $z$-score examples under a $\delta=2.0,\gamma=0.5$ watermark with multinomial sampling.}\label{tab:high-z-examples}
\end{table*}

\begin{table*}[t]
\tiny
\begin{tabular}{p{0.2cm}|p{2.75cm}|p{2.75cm}|p{2.75cm}|p{2.75cm}|p{0.2cm}|p{0.42cm}|p{0.42cm}|p{0.3cm}|p{0.3cm}}
\toprule
 idx &  prompt & real completion &  no watermark (NW) &  watermarked (W) &  $S$ &  (NW) $z$ &  (W) $z$ &  (NW) PPL &  (W) PPL \\
 \midrule
 132 & [...]cond season at Hall Bros Oval.\textbackslash nThe defender also admitted his surprise at Young’s run to the finals but credited the injection of youth into the side.\textbackslash n“We were really in a building phase last year and &  we copped a few floggings with all those juniors blokes coming in,” Galvin said.\textbackslash n“Now, we’ve kept that core group together for two years and I think we’ve come along quicker than we anticipated.\textbackslash nROCK[...truncated] &  we copped a few floggings with all those juniors blokes coming in,” Galvin said.\textbackslash n“Now, we’ve kept that core group together for two years and I think we’ve come along quicker than we anticipated.\textbackslash n“Tha[...truncated] &  we copped a few floggings with all those juniors blokes coming in,” Galvin said.\textbackslash n“Now, we’ve kept that core group together for two years and I think we’ve come along quicker than we anticipated.\textbackslash n“Tha[...truncated] &               0.58 &          -1.27 &         -1.13 &       1.05 &      1.04 \\\midrule
 444 &  [...]1 on the season with a 2.76 GAA and.906 save percentage.\textbackslash nMar. 11 5:28 PM PT6:28 PM MT7:28 PM CT8:28 PM ET0:28 GMT8:28 5:28 PM MST6:28 PM CST7:28 PM EST4:28 UAE (+1)20:28 ET21:28 BRT - Sergei Bobrovsky &   stopped 18 of 19 shots in the Blue Jackets' 2-0 loss to the Islanders on Monday. The defeat dropped him to 29-22-1 on the season with a 2.74 GAA and.907 save percentage.\textbackslash nMar. 9 5:41 PM PT6:41 PM MT7:[...truncated] &   stopped 28 of 30 shots in the Blue Jackets' 3-2 shootout loss to the Islanders on Sunday. The loss dropped him to 30-22-1 on the season with a 2.76 GAA and.906 save percentage.\textbackslash nMar. 10 6:15 PM PT7:15[...truncated] &   stopped 30 of 34 shots in the Blue Jackets' 6-2 win over the Canadiens on Monday. The victory pushed him to 31-21-1 on the season with a 2.72 GAA and.908 save percentage.\textbackslash nMar. 10 11:49 AM PT12:49 PM [...truncated] &               0.62 &          -0.99 &          2.40 &       1.33 &      1.45 \\\midrule
 302 &  [...] rating. The price target for Sol Gel Technologies is set to \$20. Sol Gel Technologies shares closed at \$11.13 on Friday.\textbackslash nAnalysts at Wells Fargo initiated coverage on Landmark Infrastructure Partners &   LP (NASDAQ: LMRK) with a Market Perform rating. The price target for Landmark Infrastructure Partners is set to \$18.50. Landmark Infrastructure Partners closed at \$17.75 on Friday.\textbackslash nBarclays initiated[...truncated] &   L.P. (NASDAQ: LMRK) with an Overweight rating. The price target for Landmark Infrastructure Partners is set to \$12. Landmark Infrastructure Partners shares closed at \$10.02 on Friday.\textbackslash nAnalysts at Jef[...truncated] &  , L.P. (NASDAQ: LMRK) with an Overweight rating. Landmark Infrastructure Partners shares rose 7.39 percent to close at \$22.75 on Friday.\textbackslash nWells Fargo initiated coverage on Freshpet Inc. (NASDAQ: FRPT) [...truncated] &               0.66 &          -2.55 &          2.83 &       1.75 &      2.08 \\\midrule
 482 &  [...]nika Aigner, with sister Elisabeth as her guide, sprang a surprise in women’s visually impaired event on the fourth day of the World Para Alpine Skiing World Cup at the Spanish resort of La Molina.\textbackslash nSw &   itzerland's Theo Gmur, Paralympic champion in the men's standing giant slalom, succeeded at the third attempt in beating France's 18-year-old world champion Arthur Bauchet at the World Para Alpine Ski[...truncated] &  eden’s Chris Vos won gold in the men’s super-G at the World Para Snowboard World Cup Finals in Klövsjö in Sweden.\textbackslash nThe final day of action in Klövsjö concludes today with the men’s super-G and women’s [...truncated] &  eden’s Chris Vos clinched gold as the World Para Snowboard World Cup Finals in Klövsjö concluded on Friday with the main event for men’s and women’s single and double slalom.\textbackslash nKlövsjö is set to host th[...truncated] &               0.70 &          -0.71 &          2.83 &       2.86 &      3.34 \\\midrule
 939 &  [...]ngs.\textbackslash nAnd now we have Trump calling on Bee to be fired. You know who else hates comedy at their expense? Authoritarian leaders. Egypt’s President Abdel Fattah el-Sisi, a man Trump loves, banned Egypt’s &   version of The Daily Show because of the way its comedy mocked the government. The show’s star, Bassem Youssef, had to flee the country and is now living in exile with his family.\textbackslash nTurkey’s strongman [...truncated] &   top comedy show after the host criticized the military’s coup over a year ago. And then President Vladimir Putin banned comedy on Russian television over the same topic.\textbackslash nIt’s not hard to see why Trum[...truncated] &   leading comedy show and radio host, Ahmed Mansour, from entering the country.\textbackslash nTrump’s attacks on freedom of expression go far beyond just news media. Trump also wants to silence those in the entertai[...truncated] &               0.61 &           0.99 &          2.83 &       4.86 &      1.27 \\
\bottomrule
\end{tabular}
\caption{Low $z$-score examples under a $\delta=2.0,\gamma=0.5$ watermark with multinomial sampling.}\label{tab:low-z-examples}
\end{table*}

\section{Experimental Details}

\subsection{Sample Outputs}\label{sec:sample-outputs}

We provide series of representative outputs from different ranges in the sample space for model generations under a soft watermark with parameters $\delta=2.0,\gamma=0.5$ under the multinomial sampling scheme. To tabulate these outputs, the $\sim 500$ generations collected at this setting are either sorted by the average spike entropy of the watermarked model's output distribution at generation time, or the measured test statistic, the $z$-score for that sequence. The top and bottom $5$ samples according to these orderings are shown for both entropy (\cref{tab:low-entropy-examples} and \cref{tab:high-entropy-examples}) and $z$-score (\cref{tab:low-z-examples} and \cref{tab:high-z-examples}).

\subsection{Measuring Perplexity: Oracle Language Model}\label{sec:measuring-ppl}

To compute perplexity, the larger, \textit{Oracle} Language Model is fed the original prompt as input, as described in the main body, and perplexity is computed via taking the exponential of the average token-wise loss according to the oracle's next token distribution at every output index. Note that loss is computed for \textit{only} the generated tokens produced by either a watermarked or non-watermarked model.

\begin{table}
\small 
\begin{tabular}{@{}c|l|l@{}}
%\toprule
\multicolumn{1}{l}{}                         & \multicolumn{2}{c}{Text is watermarked, i.e. machine-generated}                                                                                              \\ \midrule
\multirow{2}{*}{Hypothesis Test is Rejected} & TP - Watermarked text is correctly flagged.                                & FP -  Text that is not watermarked is flagged. \\
                                             & FN - Text is watermarked, but cannot be detected. & TN - Text is not watermarked and is not flaggged.            
\end{tabular}
\caption{Reference table for possible outcomes of the hypothesis test. Type-I errors, false positives, are improbable by construction of the watermarking approach, but type-II errors, false negatives, appear naturally for low-entropy sequences that cannot be watermarked.}
\label{tab:explanation_for_dummies}
\end{table}
\section{Detailed Threat Model}

For completeness, we formally define the threat model for the attacks discussed in \cref{sec:attacks} here. As described, attacks may occur when malicious users operate bots/sock-puppets on social media, try to fool a CATPCHA, or complete an academic assignment  \citep{foltynek_academic_2019}. 
In this work we formally \textit{define adversarial behavior as all efforts by a party the using machine-generated text to remove the watermark}.
It is ultimately important to remember that we describe a watermark only on the tokens of the generated text, i.e. on its form and style, and not on its semantic content. For example, a completely new essay written based on an outline or initial draft provided by a LM could not be detected. Such semantic watermarks may be possible, but we do not study this setting here.

\paragraph{Threat Model.}
We assume two parties, a model owner providing a text generation API, and an attacker attempting to remove the watermark from the API output. The attacker moves second, and is aware that the API contains a watermark. In public mode, the attacker is aware of all details of the hashing scheme and initial seed. In private mode, the attacker is aware of the watermark implementation, e.g. \cref{alg:private_robust_watermark}, but has no knowledge of the key of the pseudo-random function $F$. The attacker attempts to reduce the number of green-listed occurrences in the text, reducing the $z$-score computed by a defender. In public mode, any party can evaluate the watermark. In private mode, only the model owner can evaluate the watermark and provides a text detection API. We assume that this API is rate-limited. 
We assume the attacker has access to other non-watermarked language models, but these models are weaker than the API under attack. The attacker is allowed to modify the generated text in any way.

Note that removing the watermark is always a trivial task if language model quality is disregarded -- one can simply replace the entire text with random characters.  For this reason only attacks that result in a reasonable language quality trade-off for the attacker are relevant. A defense is hence also successful if any watermark removal by the attacker reduces the quality of generated text to that of generated text achievable using a public model.

\section{Detection Accuracy of Multinomial Sampling}
\label{sec:multinomial_results}
When a multinomial sampler is used (which is assumed by \cref{maintheorem}), we use the softmax output with standard temperature hyperparameter \texttt{temp=0.7}. We analyze the alignment between the empirical strength of the watermark and the theoretical lower bound for $\gamma=.5$ in \cref{fig:analytic}. We find that the theoretical bound is quite tight for smaller values of $\delta,$ but the theorem under-estimates watermark sensitivity for larger $\delta.$

\begin{figure}[h!]
\begin{center}
\includegraphics[width=0.5\columnwidth]{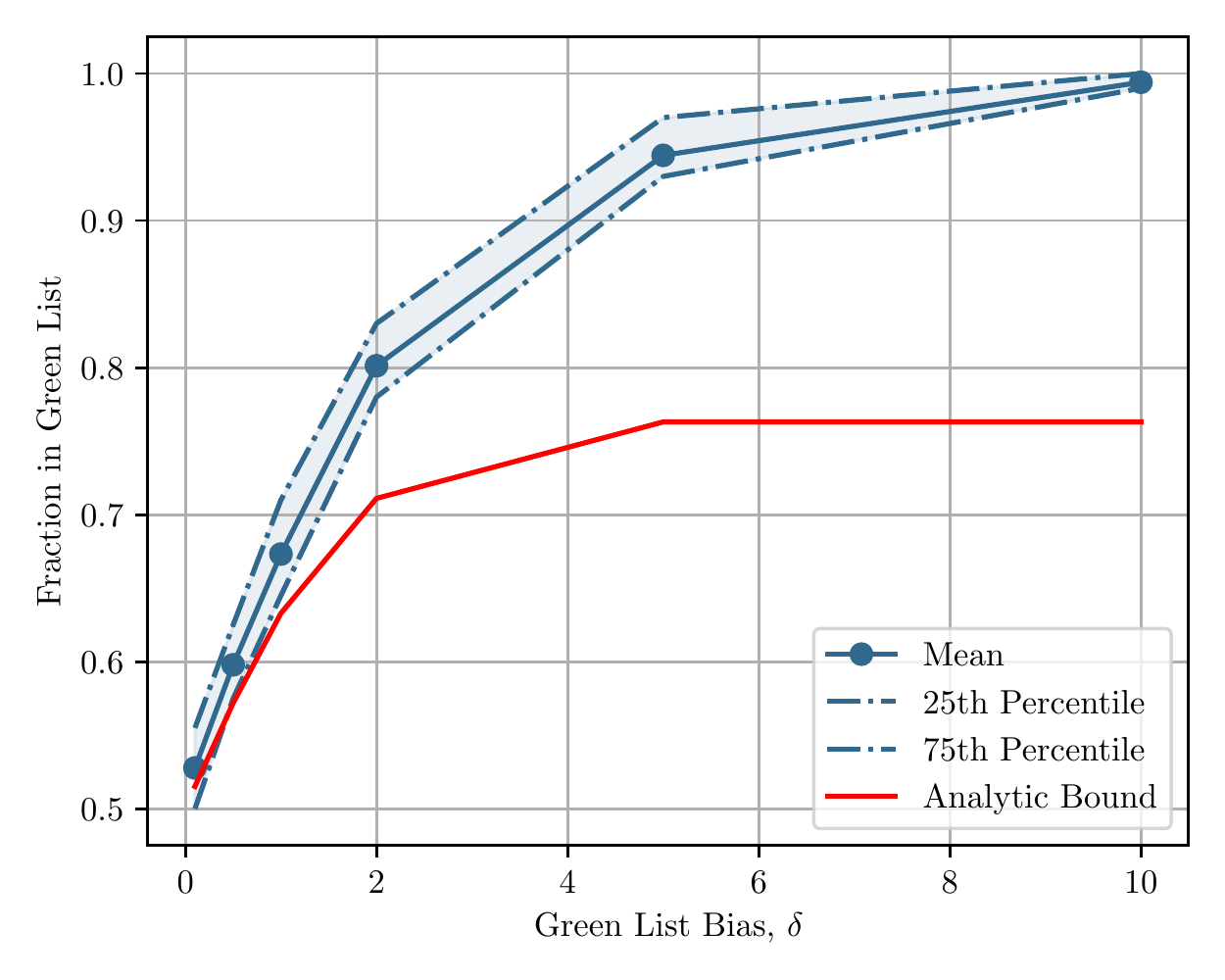}
\caption{Empirical green list fraction vs bias parameter $\delta.$  We compare to the theoretical bound predicted by \cref{maintheorem}.}
\label{fig:analytic}
\end{center}
\vskip -0.2in
\end{figure}

ROC curves for multinomial sampling, and greedy decoding with 8-way beam search in the 200 token case are depicted in \cref{fig:roc-auc} and \cref{fig:roc-auc-beams} (Subsets of \cref{fig:roc-auc-beams-compressed} from the main work.). Tables with error rates and accuracy numbers at selected $z$ values are provided in \cref{tab:joint-acc}.

\begin{figure*}[h]
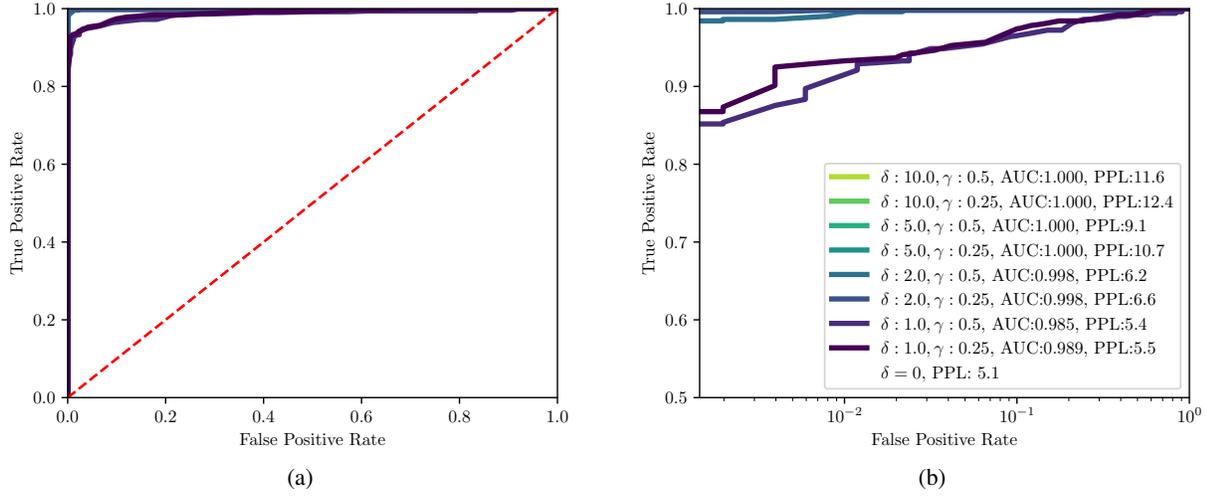

    \centering
    \subfigure[]{\includegraphics[width=0.49\textwidth]{figures/roc_auc.pdf}}%\label{fig:roc-auc}}
    \subfigure[]{\includegraphics[width=0.49\textwidth]{figures/roc_auc_zoom.pdf}\label{fig:roc-auc-zoom-2}}
    \caption{(a) ROC curve with AUC values for watermark detection. Curves for several choices of watermark parameters $\gamma$ and $\delta$ are shown - multinomial sampling is used across all settings. (b) The same chart, but with different axes to make detail visible. The stronger watermarks corresponding to lower $\gamma$ values and higher $\delta$ values achieve the best error characteristics.}
    \label{fig:roc-auc}
\end{figure*}

\begin{figure*}[h]
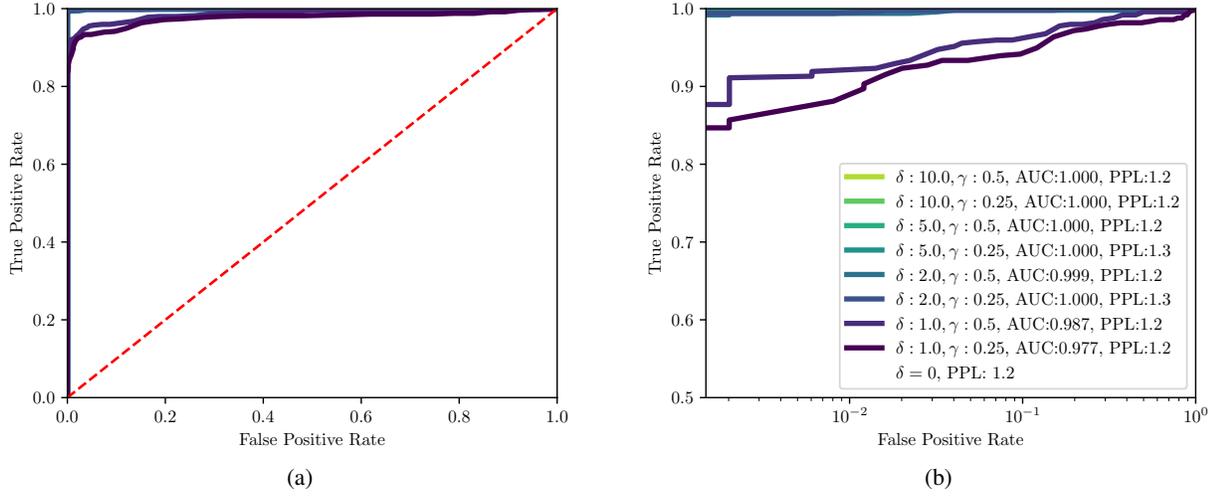

    \centering
    \subfigure[]{\includegraphics[width=0.49\textwidth]{figures/roc_auc_greedy_beams_8.pdf}%\label{fig:roc-auc-beams}
    }
    \subfigure[]{\includegraphics[width=0.49\textwidth]{figures/roc_auc_zoom_greedy_beams_8.pdf}\label{fig:roc-auc-beams-zoom-2}}
    \caption{(a) ROC curve with AUC values for watermark detection. Curves for several choices of watermark parameter $\delta$ are shown - greedy decoding and 8-way beam search is used to generate tokens in all settings. (b) The same chart, but with different axes to make detail visible. Similarly to \cref{fig:roc-auc}, higher $\delta$ values achieve stronger performance, but additionally we see that for a given $\delta$, the beam search allows the watermark to capture slightly more AUC than the corresponding parameters under the multinomial sampling scheme.}
    \label{fig:roc-auc-beams}
\end{figure*}

\begin{table*}
\small
\centering
\begin{tabular}{lrrrrrrrrrrr}
\toprule
                         sampling  & \hspace{-1.7mm}$\varepsilon$ & \hspace{-1.7mm}count &  \hspace{-1.7mm}TPR@4.0 &  \hspace{-1.7mm}FNR@4.0 &  \hspace{-1.7mm}\shortstack{w/attck\\TPR@4.0} & \hspace{-1.7mm}\shortstack{w/attck\\FNR@4.0} &  \hspace{-1.5mm}TPR@5.0 &  \hspace{-1.7mm}FNR@5.0 & \hspace{-1.7mm}\shortstack{w/attck\\TPR@5.0} & \hspace{-1.7mm}\shortstack{w/attck\\FNR@5.0} \\\midrule
 m-nom. &            0.1 &    487 &            0.984 &            0.016 &                     0.819 &                     0.181 &            0.977 &            0.023 &                     0.577 &                     0.423 \\
         m-nom. &            0.3 &    487 &            0.984 &            0.016 &                     0.353 &                     0.647 &            0.977 &            0.023 &                     0.127 &                     0.873 \\
         m-nom. &            0.5 &    487 &            0.984 &            0.016 &                     0.094 &                     0.906 &            0.977 &            0.023 &                     0.029 &                     0.971 \\
         m-nom. &            0.7 &    487 &            0.984 &            0.016 &                     0.039 &                     0.961 &            0.977 &            0.023 &                     0.012 &                     0.988 \\
        beams &            0.1 &    489 &            0.998 &            0.002 &                     0.834 &                     0.166 &            0.998 &            0.002 &                     0.751 &                     0.249 \\
        beams &            0.3 &    489 &            0.998 &            0.002 &                     0.652 &                     0.348 &            0.998 &            0.002 &                     0.521 &                     0.479 \\
        beams &            0.5 &    489 &            0.998 &            0.002 &                     0.464 &                     0.536 &            0.998 &            0.002 &                     0.299 &                     0.701 \\
        beams &            0.7 &    489 &            0.998 &            0.002 &                     0.299 &                     0.701 &            0.998 &            0.002 &                     0.155 &                     0.845 \\
\bottomrule
\end{tabular}
\caption{Error rates for watermarked text before and after attack (w/attck) for generations of length $T=200\pm5$. For all settings we use $(\delta,\gamma)=(2.0,0.5)$.  Results are shown for both multinomial sampling and greedy 8-way beam search. The TPR and FNR rates without the attack are shown for reference, but they have no dependence on the attack budget $\varepsilon.$  For all experiments, no false positives were observed and so FPR$=0$ and TPR$=1$.}
\label{tab:attacked-acc}
\end{table*}

\section{Minor Variations}

\paragraph{Multiple Watermarks.} A company might also apply multiple watermarks to generated text, taking the union of all red lists at each token. This is a compromise in terms of watermark effectiveness, compared to a single watermark, however it allows additional flexibility. A company could run a public/private watermarking scheme, giving the public access to one of the watermarks to provide transparency and independent verification that text was machine-generated. At the same time, the company can keep the second watermark private and test text against both watermarks, to verify cases reported by the public watermark, or again to provide a stronger detection API. Such a setup would be especially effective in detecting whether an attack took place that attempted to remove the public watermark. 

\paragraph{Selective Watermarks in response to malicious activity}
Watermarks could also be used selectively. An API owner could turn on watermarking (or dial up its strength considerably via increased $\delta$) only when faced with suspicious API usage by some accounts, for example if a request appears to be part of malicious activity like creating synthetic tweets. This would give more leeway to benign API usages, but allow for improved tracing of malicious API utilization.

\paragraph{Discovering A Watermarking Scheme}
So far we assumed that an attacker is aware that a watermark is present. Could the attack discover this fact only by analyzing generated text? For a hard watermark, this would be easy: Some combinations of tokens will never be generated by the model, no matter how strongly they are prompted. Yet, for a soft watermark (especially with small $\delta$), that depends on, e.g. $h=10$ tokens via \cref{alg:private_robust_watermark}, this becomes harder. The attacker would need to distinguish the modification of green list logits via $\delta$ from naturally occurring biases of the LM. 

\section{Proof of Theorem \ref{maintheorem}}
We begin our proof with a useful lemma.
Using the spike entropy, we can predict how often a watermarked language model will spit out a green list token.  When the entropy is high, the language model has a lot of freedom and we expect the model to use green list tokens aggressively.  When the entropy is low, the model is more constrained and it is more likely to use a red list token.   

\begin{lemma} \label{tokenlemma}
Suppose a language model produces a raw (pre-watermark) probability vector $p\in (0,1)^N$.  Randomly partition $p$ into a green list of size $\gamma N$ and a red list of size $(1-\gamma) N$ for some $\gamma\in (0,1).$ Form the corresponding watermarked distribution by boosting the green list logits by $\delta$, as in Equation \eqref{logitboost}. Define $\alpha=exp(\delta).$ 

Sample a token index $k$ from the watermarked distribution.  The probability that the token is sampled from the green list is at least
$$ \prob[k\in G] \ge  \frac{\gamma\alpha}{ 1+(\alpha-1)\gamma} S\left(p,\frac{(1-\gamma)(\alpha - 1)}{ 1+(\alpha-1)\gamma}\right).$$
\end{lemma}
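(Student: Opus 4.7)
The plan is to reduce the randomness over $G$ to a per-token computation and then apply Jensen's inequality. First, I would write the probability that the sampled token lies in $G$, conditional on a fixed partition, in closed form. Summing the top branch of Equation~\eqref{logitboost} over $k\in G$ telescopes the denominator and gives
$$\prob[k\in G \mid G] \;=\; \sum_{k\in G}\hat p_k \;=\; \frac{\alpha\,P_G}{1+(\alpha-1)P_G}, \qquad P_G:=\sum_{i\in G}p_i.$$
Taking expectation over the random partition and swapping sum with expectation, the quantity of interest becomes
$$\prob[k\in G] \;=\; \sum_k \mathbb{E}_G\!\left[\mathbb{1}_{k\in G}\,\frac{\alpha p_k}{1+(\alpha-1)P_G}\right] \;=\; \sum_k \gamma\alpha p_k\,\mathbb{E}\!\left[\frac{1}{1+(\alpha-1)P_G}\,\Big|\,k\in G\right],$$
using that every token is green with marginal probability $\gamma$.

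Next, I would apply Jensen's inequality to the convex map $x\mapsto 1/(1+(\alpha-1)x)$ to push the expectation inside:
$$\mathbb{E}\!\left[\frac{1}{1+(\alpha-1)P_G}\,\Big|\,k\in G\right] \;\ge\; \frac{1}{1+(\alpha-1)\,\mathbb{E}[P_G\mid k\in G]}.$$
Conditioning on $k\in G$, the remaining $\gamma N - 1$ green indices are chosen uniformly from the other $N-1$ tokens, so $\mathbb{E}[P_G\mid k\in G] = p_k + \tfrac{\gamma N-1}{N-1}(1-p_k) \le \gamma + (1-\gamma)p_k$. Substituting this upper bound (which preserves the lower-bound direction) and factoring $1+(\alpha-1)\gamma$ out of the denominator rewrites the inner fraction as $\tfrac{1}{1+(\alpha-1)\gamma}\cdot\tfrac{1}{1+z\,p_k}$ with $z := (1-\gamma)(\alpha-1)/(1+(\alpha-1)\gamma)$. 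Pulling the constant prefactor through the sum leaves exactly $S(p,z)$, yielding the claimed bound.

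The two places requiring care are the direction of the Jensen step (convexity of $1/(1+cx)$ for $c>0$ gives the right-signed inequality for a lower bound on $\prob[k\in G]$) and the finite-$N$ correction from sampling $G$ without replacement. The latter is mild: since $\tfrac{\gamma N-1}{N-1}<\gamma$, the true conditional expectation is strictly below $\gamma+(1-\gamma)p_k$, making $1/(1+(\alpha-1)\mathbb{E}[P_G\mid k\in G])$ strictly larger than the quantity we keep, so the bookkeeping works in our favor and the clean closed-form bound still holds. I expect the main obstacle to be the Jensen direction; once the bound is seen to be convex in $P_G$ and the conditional mean of $P_G$ is computed, the rest is algebraic rearrangement into the spike-entropy form.
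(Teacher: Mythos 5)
Your proposal is correct and takes essentially the same route as the paper: condition on a fixed $k\in G$, apply Jensen to the convex map $x\mapsto 1/(1+(\alpha-1)x)$ to pull the expectation inside, bound $\mathbb{E}[P_G\mid k\in G]$ above by $\gamma+(1-\gamma)p_k$, and sum over $k$ to recover the spike entropy. Your scalar Jensen step is a slightly more direct phrasing of the paper's permutation-averaging argument (which invokes convexity of $f_k$ in $p_{-k}$ and replaces those coordinates by their mean), but the two yield exactly the same intermediate bound, and your loosening $\tfrac{\gamma N-1}{N-1}\le\gamma$ plays the same role as the paper's step of adding $\alpha$ to the numerator and denominator.
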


\begin{proof}
 When we add $\delta$ to the logits corresponding to the green list words, we increase their probabilities of being sampled. 
 We replace the raw probability $p_k$ for each green list word with the enlarged probability
   $$p^w_k \triangleq  \frac{\alpha p_k}{\sum_{i\in R} p_i + \alpha \sum_{i\in G} p_i },$$
where $G$ is the set of green list indices and $R$ is the complementary set of red list indices. We denote the sizes of these sets as $N_G$ and $N_R,$ respectively.

We begin our proof by bounding the size of a randomly chosen gre-list probability after it has been enlarged.  Consider the following process for creating the lists. First, choose a random entry $p_k$ and place it in the green list.  Then, randomly sample the remaining entries in the green list.  The expected value of a randomly chosen probability from the green list can be written  
 \begin{align}
 \expect_{k < N} \expect_{G, R}  \frac{\alpha p_k}{\sum_{i\in R} p_i + \alpha \sum_{i\in G} p_i }, \label{nested}
 \end{align}
 where the inner expectation is over uniformly random green/red partitions that satisfy $k\in G.$

 Now let's bound the inner expectation on the right.  Consider the helper function
 $$f_k(p) =  \expect_{G, R}  \frac{\alpha p_k}{\sum_{i\in R} p_i + \alpha \sum_{i\in G} p_i },$$
 where $G$ and $R$ are sampled at random from the set of partitions that satisfy $k \in G$. The value of $f_k$ is invariant to permutations in the order of the indices $\{p_i, i\neq k\}.$ For this reason $f(p) = \expect_\Pi f(\Pi p),$ where $\Pi$ is a random permutation that leaves $p_k$ in place.    Also, $f_k$ is convex in $p_{-k}$.  By Jensen's inequality, 
$$f(p) = \expect_\Pi f(\Pi p) \ge   f(\expect_\Pi \Pi p).  $$

The expectation on the right involves a probability vector  $\bar p \triangleq \expect_\Pi \Pi p$ in which $\bar p_i = (1-p_0)/(N-1)$ for $i\neq k.$  We now have
\begin{align} 
f_k(p) \ge f_k(\bar p) &=  \frac{\alpha p_k}{N_R(1-p_k)/(N-1) + \alpha (N_G-1)(1-p_0)/(N-1) + \alpha p_0 } \\
&=  \frac{\alpha p_k(N-1)}{ (N_R+\alpha N_G - \alpha) (1-p_k)   + \alpha p_0(N-1) } \\
&=  \frac{\alpha p_k(N-1)}{ N_R+\alpha N_G - \alpha +  (\alpha N - N_R-\alpha N_G  ) p_k  } \\ 
&=  p_k \frac{\alpha  N-\alpha }{ N_R+\alpha N_G - \alpha +  (\alpha N_R - N_R  ) p_k  } \\ 
&\ge  p_k \frac{\alpha  N}{ N_R+\alpha N_G +  (\alpha N_R - N_R  ) p_k  }.  \label{fbound}
\end{align}
In the last step we used the fact that the numerator is larger than the denominator, and so adding $\alpha$ to the numerator and denominator results in a small decrease in the bound.  Also, note that the fraction on the right side of \eqref{fbound} is strictly greater than 1 for any value of $p_k\in (0,1)$ and $\alpha \ge 1$. For this reason the bound is never vacuous, as $f_k(p)>p_k$. 

Now let $\gamma = N_G/N.$  This simplifies the notation of our intermediate result to
\begin{align}
f_k(p) &\ge \frac{\alpha p_k }{ (1-\gamma) +\alpha \gamma +  (\alpha - 1)(1-\gamma) p_k  }.
\end{align}

Using this expression to simplify \eqref{nested} we get
  $$\expect_{k < N} \expect_{G, R}  \frac{\alpha p_k}{\sum_{i\in R} p_i + \alpha \sum_{i\in G} p_i } = \expect_{k < N} f_k(p) 
  \ge \frac{\alpha N^{-1}}{  1+(\alpha-1)\gamma } S\left(p,\frac{(1-\gamma)(\alpha - 1)}{ 1+(\alpha-1)\gamma}\right).$$
The probability of sampling a token from the green list is exactly $N_G$ times larger than an average green list probability. The probability of sampling from the green list is thus given by
$$N_G \expect_{k < N} \expect_{G, R}  \frac{\alpha p_k}{\sum_{i\in R} p_i + \alpha \sum_{i\in G} p_i } 
\ge \frac{\gamma\alpha}{ 1+(\alpha-1)\gamma} S\left(p,\frac{(1-\gamma)(\alpha - 1)}{ 1+(\alpha-1)\gamma}\right).$$

  \end{proof}

It can be observed that the bound in Lemma \ref{tokenlemma} is never vacuous;  The probability of choosing a token from the green list is trivially at least $\gamma,$  and for any combination of finite logits the bound in Lemma \ref{tokenlemma} is strictly greater than this trivial lower bound. See the proof for a discussion of why.

Using this lemma, it's now fairly straightforward to prove the main theorem.

\begin{proof}
Lemma \ref{tokenlemma} bounds the probability of a single token being in the green list.  To compute the total number of green list tokens in the sequence, we simply sum this bound over all the tokens to get.  
 $$ \expect |s|_G  = \sum_t \frac{\gamma\alpha}{ 1+(\alpha-1)\gamma} S^{t}  = T \expect_t \frac{\gamma\alpha}{  1+(\alpha-1)\gamma } S^{t} 
 \ge 
 \frac{\gamma\alpha T}{ 1+(\alpha-1)\gamma} S^\star,$$
where $S^{(t)}$ represents the entropy of the distribution of token $t$.

To get the variance bound, we begin by noting that the variance of a Bernoulli random variable with success probability $p$ is $p(1-p).$   The expected number of green list tokens is a sum of independent random Bernoulli variables, each representing one token. These variables are {\em not} identically distributed, but rather each has a success probability given by Lemma \ref{tokenlemma}.  The variance of the sum is the sum of the variances, which is
$$ \text{Var}\,\, |s|_G =
\sum_t \frac{\gamma\alpha S^{(t)} }{ 1+(\alpha-1)\gamma}  \left(1-\frac{\gamma\alpha S^{(t)}}{  1+(\alpha-1)\gamma} \right)
= T \expect_t \frac{\gamma\alpha S^{(t)} }{  1+(\alpha-1)\gamma}  \left(1-\frac{\gamma\alpha S^{(t)} }{  1+(\alpha-1)\gamma} \right).  
$$
The expectation on the right contains a concave function of $S^{t}.$ By Jensen's inequality, we can pass the expectation inside the function to get
$$ \text{Var}\,\, |s|_G \le
T  \frac{\gamma\alpha  \expect_t S^{(t)} }{  1+(\alpha-1)\gamma}  \left(1-\frac{\gamma\alpha \expect_t S^{(t)} }{  1+(\alpha-1)\gamma } \right).  
$$
Finally, note that the probability of a token being in the green list is always at least $\gamma,$ regardless of the distribution coming from the language model.  Lemma \ref{tokenlemma} is never vacuous, and the success probability predicted by the Lemma is always at least $\gamma$.  If $\gamma\ge .5,$ then the variance of each Bernoulli trial is at most the variance of a Bernoulli trial with success probability $\gamma,$ which is given by $\gamma(1-\gamma).$  Plugging this into our bound gives  
 $$ \text{Var}\,\, |s|_G \le
T  \frac{\gamma\alpha  \expect_t S^{(t)} }{  1+(\alpha-1)\gamma}  \left(1-\frac{\gamma\alpha \expect_t S^{(t)} }{  1+(\alpha-1)\gamma} \right) \le T \gamma(1-\gamma).  
$$
\end{proof}

\section{Proof of Proposition \ref{perpbound}}
\begin{proof}
The probability of sampling token $k$ from the modified distribution is
   \begin{align}
   \hat p_k =  \expect_{G, R}  \frac{\alpha p_k}{\sum_{i\in R} p_i + \alpha \sum_{i\in G} p_i }, 
 \end{align}
where $G$ and $R$ are random partitions of the vocabulary indices. We can write this expected value as the sum of a contribution from the case in which $k\in G,$ and one in which $k\in R.$  We get 
   \begin{align} 
   \expect_{G, R}&  \frac{\alpha p_k}{\sum_{i\in R} p_i + \alpha \sum_{i\in G} p_i } = \expect_{G, R, k\in G}  \frac{\alpha p_k}{\sum_{i\in R} p_i + \alpha \sum_{i\in G} p_i }\\ & + \expect_{G, R, k\notin G}  \frac{\alpha p_k}{\sum_{i\in R} p_i + \alpha \sum_{i\in G} p_i } 
   \le  \gamma\alpha p_k +  (1-\gamma) p_k = (1+(\alpha-1)\gamma)p_k.
 \end{align}
The expected perplexity is then given by 
 $$  \expect_{G, R} \sum_k \hat p^{(t)}_k \ln(p^{(t)}_k) = \sum_k \expect_{G, R}  \hat p^{(t)}_k \ln(p^{(t)}_k) \le  (1+(\alpha-1)\gamma)p^*.$$
\end{proof}

\begin{table*}[h!]
\small
\caption{Performance measured using standard metrics Exact Match (EM) and whitespace-tokenized F1 score against each question's answer alias list. ``(W)'' indicates generation with the watermark. Data is 50,000 samples from the validation split of the unfiltered version of TriviaQA dataset. Questions are posed to the model in a zero-shot manner with no in-context demonstrations. Prompt template used: \texttt{f"The following is a trivia question with a single correct factual answer. Please provide the answer to the question.\textbackslash{}n\textbackslash{}nQuestion \{q\}\textbackslash{}n\textbackslash{}nAnswer: "}. Generation is performed using greedy decoding to maximize baseline/unwatermarked performance.}
\centering
\begin{tabular}{lcccccc}
\toprule
Model & EM & EM (W) & F1 & F1 (W) & z & z (W) \\
\midrule
\texttt{google/flan-ul2} & 0.374 & 0.336 & 0.415 & 0.378 & -0.007 & 0.402 \\
\texttt{bigscience/bloomz} & 0.296 & 0.259 & 0.343 & 0.312 & 0.008 & 0.255 \\
\bottomrule
\end{tabular}\label{tab:trivia-qa}
\end{table*}

\section{Impact of Watermarking on Model Factuality}\label{sec:factuality}

A key benefit of the soft watermarking scheme is that it (passively) adapts to the current entropy in the model's output distribution. If the model is highly confident on its next few token predictions, say those representing a specific named entity, then a soft watermark will not affect those predictions regardless of their factuality or groundedness. On the other hand, if the model is not confident on any particular tokens, then under standard decoding schemes, whether or not the final decoded output is hallucinatory will be a random event, and the watermark has an equal chance of upweighting tokens that result in more factual or more hallucinatory utterances.

To illustrate this, we present a small experiment that isolates this behavior. We take a model with reasonable competency in knowledge-intensive, closed-book question answering and evaluate its performance on the validation set of the TriviaQA question answering dataset \citep{JoshiTriviaQA2017}. \textit{Hypothesis:} since answers to factoid questions should be short, low entropy sequences, a soft watermark will yield low detection statistics, however, task performance will not degrade much under application of the watermark. The results for this experiment are shown in \cref{tab:trivia-qa} and provide some evidence that in factuality critical generation scenarios, a softly watermarked model is unlikely to deviate that much from its unwatermarked behavior (for better or worse). We observe less than a $4$ point drop in Exact Match performance under the application of a standard soft watermark ($\gamma,\delta=0.5,2.0$) for both Google's FLAN-UL2 model \citep{tay2022unifying} and Huggingface BigScience's BLOOMZ model \citep{muennighoff2022crosslingual}.

However, we note that this particular experimental setup is not a situation where we would actually deploy the watermark or expect it to work very well. Generating 5 to 10 tokens per question under greedy decoding and then testing those tokens for exact correctness, is something of a worst-case estimate on the cost of watermarking. In this scenario the prompt is highly constraining and the only things the watermark can do are either nothing, or directly cause the model to deviate from the argmax. Such deviations would be detrimental on any question where the model ``knows'' the correct answer but isn't overwhelmingly confident in the token sequence required to represent it (especially the surface form). We leave a more comprehensive study of the impacts of watermarking strategies on the factuality of LLMs in question answering and other knowledge intensive settings to future research.

%%%%%%%%%%%%%%%%%%%%%%%%%%%%%%%%%%%%%%%%%%%%%%%%%%%%%%%%%%%%%%%%%%%%%%%%%%%%%%%
%%%%%%%%%%%%%%%%%%%%%%%%%%%%%%%%%%%%%%%%%%%%%%%%%%%%%%%%%%%%%%%%%%%%%%%%%%%%%%%

\label{lastpageapp}
\end{document}